\def\eqref#1{equation~\ref{#1}}
\def\1{\bm{1}}
\DeclareMathAlphabet{\mathsfit}{\encodingdefault}{\sfdefault}{m}{sl}
\SetMathAlphabet{\mathsfit}{bold}{\encodingdefault}{\sfdefault}{bx}{n}
\DeclareMathOperator*{\argmax}{arg\,max}
\newcommand{\piref}{\pi_\text{ref}}
\newcommand{\pisft}{\pi^\text{SFT}} %
\newcommand{\cs}{\mathbf{s_t}}
\newcommand{\ns}{\mathbf{s_{t+1}}}
\newcommand{\ca}{\mathbf{a_t}}
\newtheorem{theorem}{Theorem}
\newtheorem{proposition}{Proposition}
\newtheorem{lemma}{Lemma}
\title{Failure Modes of Maximum Entropy RLHF}
\author{Ömer Veysel Çağatan \\
  KUIS AI Center, Koç University \\
  Sariyer/\.Istanbul, Turkey \\
  \texttt{ocagatan19@ku.edu.tr} \\ \And
  Barış Akgün \\
  Koç University, KUIS AI Center \\
  Sariyer/\.Istanbul, Turkey \\
  \texttt{baakgun@ku.edu.tr} \\}
\begin{document}
\maketitle
\begin{abstract}
In this paper, we show that Simple Preference Optimization (SimPO) can be derived as Maximum Entropy Reinforcement Learning, providing a theoretical foundation for this reference-free method. Motivated by SimPO's strong performance in offline preference optimization, we investigate whether Maximum Entropy RL can achieve similar results in online RLHF settings. Our experiments find that Maximum Entropy RL frequently exhibits overoptimization and unstable KL dynamics across model scales, with overoptimization persisting even at conservative learning rates for some configurations. Unlike KL-constrained methods that maintain stable training, entropy regularization fails to reliably prevent reward hacking and, in our experiments, correlates with the onset of overoptimization rather than guarding against it. Even in configurations where training remains stable, entropy regularization is not the stabilizing factor. Lastly, we discuss possible explanations for why SimPO succeeds in offline settings while Maximum Entropy RL struggles in online scenarios. Our findings suggest that reference-free approaches may face distinct challenges when applied to online versus offline preference learning.
\end{abstract}

\section{Introduction}

Aligning AI systems with human values is widely recognized as a central challenge in modern AI~\citep{bengio2025singaporeconsensusglobalai,stuart}. The dominant paradigm, Reinforcement Learning from Human Feedback (RLHF)~\citep{christiano2023deepreinforcementlearninghuman,stiennon2022learningsummarizehumanfeedback,ziegler2020finetuninglanguagemodelshuman,bai2022traininghelpfulharmlessassistant,ouyang2022traininglanguagemodelsfollow}, typically follows a three-stage pipeline: supervised fine-tuning, reward model training from preference data, and policy optimization with reinforcement learning under KL divergence regularization to constrain deviation from a reference model. While effective, this pipeline is computationally expensive and operationally complex, requiring separate reward models, substantial human annotation, and careful hyperparameter tuning to maintain stability.

These challenges have motivated direct alignment algorithms (DAAs)~\citep{rafailov2024scalinglawsrewardmodel}, which bypass explicit reward modeling and online RL. Direct Preference Optimization (DPO)~\citep{rafailov2024directpreferenceoptimizationlanguage} derives an analytical solution to a KL-regularized RL objective, expressing the reward implicitly as a function of the optimal policy and reducing preference learning to supervised optimization. More recently, Simple Preference Optimization (SimPO)~\citep{meng2024simposimplepreferenceoptimization} has demonstrated strong empirical performance while eliminating the reference model entirely, instead using length-normalized log-likelihoods and a fixed margin between preferred and dispreferred responses.

Despite its empirical success, SimPO has lacked a principled theoretical foundation comparable to that of reference-based methods such as DPO. In this work, we establish a connection between SimPO and Maximum Entropy Reinforcement Learning~\citep{ziebart2008maximum}. We show that SimPO can be interpreted as a closed-form solution to a Maximum Entropy RL objective, providing a theoretical grounding analogous to DPO's relationship with KL-constrained RL and suggesting that reference-free optimization can naturally arise from entropy regularization under appropriate conditions.

This perspective raises an important empirical question: if SimPO corresponds to an offline Maximum Entropy solution, can online Maximum Entropy RL serve as a viable alternative to KL-constrained methods in RLHF? To investigate this, we compare Maximum Entropy RL and KL-constrained RL on the TL;DR summarization benchmark~\citep{stiennon2022learningsummarizehumanfeedback} using models from the Pythia suite~\citep{biderman2023pythiasuiteanalyzinglarge}.

Our results reveal a clear asymmetry. While SimPO performs reliably in offline preference optimization, online Maximum Entropy RL frequently exhibits instability and overoptimization, even at conservative learning rates. We observe that increases in entropy often correlate with these instabilities, indicating that entropy regularization alone does not reliably prevent reward hacking and may, in some cases, exacerbate it. We hypothesize that SimPO benefits from implicit stabilizing mechanisms—such as dataset constraints and target margins—that partially substitute for the regularization provided by a reference model, whereas these protections are absent in online Maximum Entropy RL.

Our contributions are threefold. First, we provide a theoretical interpretation of SimPO as a Maximum Entropy Reinforcement Learning solution, situating it within established RL frameworks. Second, we empirically demonstrate that directly applying Maximum Entropy RL in online RLHF settings can lead to instability and overoptimization, highlighting limitations of entropy regularization in isolation. Third, we offer insight into why SimPO succeeds offline despite these challenges, emphasizing the role of implicit regularization through data constraints and margin-based objectives. Together, these findings clarify the relationship between entropy-based methods and preference optimization, and point to the need for additional regularization mechanisms for robust reference-free alignment in online settings.

\section{Background}
In this section, we review the relevant background topics, while additional related work is provided in Appendix~\ref{app:related_work}.
\subsection{Canonical RLHF}\label{section:prelims}

We reiterate the standard RLHF pipeline as outlined in \citep{ziegler2020finetuninglanguagemodelshuman} and subsequent works \citep{stiennon2022learningsummarizehumanfeedback, bai2022traininghelpfulharmlessassistant, ouyang2022traininglanguagemodelsfollow}. It consists of three main stages: (1) Supervised Fine-Tuning (SFT), (2) Reward Modeling, and (3) RL Optimization.

\paragraph{SFT:} A pre-trained LM is fine-tuned on task-specific high-quality data via supervised learning to obtain the initial policy $\pisft$.

\paragraph{Reward Modeling:} Prompts $x$ are sampled, and $\pisft$ generates answer pairs $(y_1, y_2)$. Human annotators indicate preferences $y_w \succ y_l \mid x$, assumed to reflect a latent reward function $r(x, y)$. A common approach is to model preferences with the Bradley-Terry (BT) model~\citep{19ff28b9-64f9-3656-ba40-08326a05748e}:
\begin{equation}\label{eq:bradley-terry}
p(y_1\succ y_2 \mid x)= \frac{e^{r^*(x, y_1)}}{e^{r^*(x, y_1)} + e^{r^*(x, y_2)}}
\end{equation}
Given a dataset $\mathcal{D}=\{x^{(i)}, y_w^{(i)}, y_l^{(i)}\}$, we learn a reward model $r_\phi$ by minimizing the binary classification loss:
\begin{align*}
\mathcal{L}_R &= -\mathbb{E}_{(x,y_w,y_l)\sim\mathcal{D}} \\
&\quad\left[\log \sigma\left(r_\phi(x, y_w) - r_\phi(x, y_l)\right)\right],
\end{align*}
where $\sigma$ is the sigmoid function. In practice, $r_\phi$ is initialized from $\pisft$ with a linear head, and reward outputs are normalized for stability.

\paragraph{RL Fine-Tuning:} Finally, the policy $\pi_\theta$ is optimized using the learned reward, constrained by a KL term to stay close to the reference policy $\piref = \pisft$:
\begin{align*}\label{eq:rl_objective}
\max_{\pi_\theta} \; \mathbb{E}_{x \sim \mathcal{D},\, y \sim \pi_\theta}\!\bigl[r_\phi(x, y)\bigr] \\
\;-\; \beta\, D_{\mathrm{KL}}\!\bigl[\pi_\theta(y \mid x)\,\|\,\piref(y \mid x)\bigr].
\end{align*}
This prevents overoptimization and distributional shift. In practice, this objective is optimized with PPO \citep{schulman2017proximalpolicyoptimizationalgorithms}, with the KL penalty folded into the per-step reward as $r(x, y) = r_\phi(x, y) - \beta(\log \pi_\theta(y \mid x) - \log \piref(y \mid x))$.

\subsection{Direct Preference Optimization}
Direct Preference Optimization (DPO)~\citep{rafailov2024directpreferenceoptimizationlanguage} has become a popular method for preference-based tuning. Unlike traditional approaches that train a separate reward model, DPO defines the reward directly in terms of the optimized policy:
\label{eq:dpo_reward}
\begin{align*}
r(x,y) = \beta \log \frac{\pi_\theta(y \mid x)}{\pi_{\text{ref}}(y \mid x)} + \beta \log Z(x),
\end{align*}

Here, $\pi_\theta$ is the current policy, $\pi_{\text{ref}}$ is a reference (often the SFT model), and $Z(x)$ is a normalization term.
DPO incorporates this reward into the Bradley-Terry~\citep{19ff28b9-64f9-3656-ba40-08326a05748e} framework, where preference probabilities are given by:$p(y_w \succ y_l \mid x) = \sigma \left( r(x, y_w) - r(x, y_l) \right)$. This leads to the following objective, computed over preference triplets $(x, y_w, y_l)$:

\begin{align*}
\mathcal{L}_{\text{DPO}}(\pi_\theta; \pi_{\text{ref}}) = - \mathbb{E}_{(x, y_w, y_l) \sim \mathcal{D}} \\\left[ \log \sigma  \left( \beta \log \frac{\pi_\theta(y_w \mid x)}{\pi_{\text{ref}}(y_w \mid x)} - \beta \log \frac{\pi_\theta(y_l \mid x)}{\pi_{\text{ref}}(y_l \mid x)}\right) \right]
\end{align*}\label{eq:dpo}

By modeling preferences directly through policy ratios, DPO removes the need for an explicit reward model while remaining grounded in a probabilistic preference framework.

\subsection{Simple Preference Optimization}

Simple Preference Optimization (SimPO)~\citep{meng2024simposimplepreferenceoptimization} is a reference-free method for preference-based fine-tuning that aligns the reward used in training with the likelihood used at inference. Unlike DPO, SimPO eliminates the need for a reference policy by defining the reward as the length-normalized log-likelihood of the model output:

\begin{align*}
r_{\text{SimPO}}(x,y) = \frac{\beta}{|y|} \log \pi_\theta(y \mid x) = \\ \frac{\beta}{|y|} \sum_{i=1}^{|y|} \log \pi_\theta(y_i \mid x, y_{<i})
\end{align*}

This formulation ensures that the reward ranking $r(x, y_w) > r(x, y_l)$ aligns with the generation-time likelihood ranking $p_\theta(y_w \mid x) > p_\theta(y_l \mid x)$, which is often violated in DPO. SimPO also introduces a target margin $\gamma > 0$ into the Bradley-Terry model to encourage separation between preferred and dispreferred responses:

\begin{align*}
p(y_w \succ y_l \mid x) = \sigma \left( r(x, y_w) - r(x, y_l) - \gamma \right)
\end{align*}

This leads to the SimPO training objective:
\label{eq:simpo}
\begin{align*}
\mathcal{L}_{\text{SimPO}}(\pi_\theta) = - \mathbb{E} \left[ \log \sigma \left( \frac{\beta}{|y_w|} \log \pi_\theta(y_w \mid x) \right. \right. \\
\left. \left. \quad - \frac{\beta}{|y_l|} \log \pi_\theta(y_l \mid x) - \gamma \right) \right]
\end{align*}

\section{SimPO is the Maximum Entropy RL}
SimPO is a widely used preference alignment method, appreciated for its strong empirical performance and simplicity due to its reference-free objective. However, it lacks a theoretical foundation, unlike reference-based approaches such as DPO, which is derived from a KL-constrained RL objective. Recent work~\citep{liu2024understandingreferencepoliciesdirect} made the important observation that posterior probability rewards correspond to Maximum Entropy RL in their analysis of reference policies. Building on this insight, we establish the connection between this Maximum Entropy RL formulation and SimPO, showing that SimPO can be understood as Maximum Entropy RL with the addition of length normalization and target margins from preference learning.

\subsection{Maximum Entropy RL}
Maximum Entropy Reinforcement Learning (MaxEnt RL) augments the standard RL objective with an entropy term, encouraging policies that align with the soft value function~\citep{ziebart2008maximum, toussaint2009robot, rawlik2013stochastic, fox2015taming, o2016combining, abdolmaleki2018maximum, sac, mazoure2020leveraging, han2021max,zhang2025maximumentropymisleadspolicy}. It is deeply connected to probabilistic inference~\citep{toussaint2009robot, rawlik2013stochastic, levine2018reinforcement} and supported by both stochastic inference~\citep{ziebart2010modeling, eysenbach2021maximum} and game-theoretic foundations~\citep{grunwald2004game, ziebart2010maximum, han2021max, kim2023adaptive}. MaxEnt is often favored for promoting exploration~\citep{sac, hazan2019provably}, smoothing optimization~\citep{ahmed2019understanding}, and enabling robust decision-making~\citep{eysenbach2021maximum}.

The general form of the Maximum Entropy Reinforcement Learning (MaxEnt RL) objective can be written as
\begin{align*}
\pi^\star = \argmax_{\pi} \; \mathbb{E}_{\tau \sim p^\pi(\tau)} \\ \left[ \sum_{t=1}^T r(\cs, \ca) + \alpha \, \mathcal{H}_\pi[\ca \mid \cs] \right]
\end{align*}

where $\tau = (\mathbf{s_1}, \mathbf{a_1}, \mathbf{s_2}, \mathbf{a_2}, \dots, \mathbf{s_T}, \mathbf{a_T})$ is a trajectory sampled under policy $\pi$, and $p^\pi(\tau) = p_1(\mathbf{s_1}) \prod_{t=1}^T \pi(\ca \mid \cs) \, p(\ns \mid \cs, \ca)$ denotes the trajectory distribution induced by $\pi$. The term $\mathcal{H}_\pi[\ca \mid \cs] = -\int \pi(\ca \mid \cs) \log \pi(\ca \mid \cs) \, d\ca$ represents the conditional entropy of the policy at each time step, and the temperature coefficient $\alpha$ controls the trade-off between reward maximization and policy stochasticity.

\subsection{SimPO from Maximum Entropy RL}
RLHF is commonly modeled as a contextual bandit problem, though some approaches treat it as a token-level MDP~\citep{rafailov2024rqlanguagemodel,xie2024exploratorypreferenceoptimizationharnessing}. In this work, we adopt the contextual bandit view~\citep{Elwood_2023}, under which the maximum entropy formulation aligns with KL-constrained objectives. The resulting objective is given as follows.

\begin{equation}\label{equat:maxentrl}
\max_{\pi} \mathbb{E}_{x\sim D,y\sim\pi} [r(x, y)] + \alpha \mathcal{H}[\pi(y|x)]
\end{equation}

It is straightforward to show that the optimal policy of equation~\ref{equat:maxentrl} (proof in Appendix~\ref{app:mathderivations}) is as follows:
\begin{equation}
    \pi_r(y|x) = \frac{1}{Z(x)} \exp\left(\frac{1}{\alpha} r(x, y)\right)
\end{equation}

Following the analytical approach used in DPO's derivation, we can rearrange this optimal policy equation to express the reward function in terms of the policy:

\begin{equation}
r(x, y) = \alpha \log \pi_r(y|x) + \alpha \log Z(x)
\end{equation}

Now, applying this reparameterization to the Bradley-Terry preference model. For the ground-truth reward $r^*$ and corresponding optimal policy $\pi^*$, the preference probability becomes:

\begin{equation}
p^*(y_1 \succ y_2 | x) = \sigma(r^*(x, y_1) - r^*(x, y_2))
\end{equation}

Substituting our reparameterization:

\begin{align*}
p^*(y_1 &\succ y_2 | x) \nonumber \\
&= \sigma\big(\alpha \log \pi^*(y_1|x) + \alpha \log Z(x) \nonumber \\
&\qquad - \alpha \log \pi^*(y_2|x) - \alpha \log Z(x)\big) \\
&= \sigma\left(\alpha \log \pi^*(y_1|x) - \alpha \log \pi^*(y_2|x)\right)
\end{align*}

Crucially, the partition function $Z(x)$ cancels out, eliminating the need to compute it explicitly. This establishes a direct connection between the Bradley--Terry preference model and the optimal policy induced by Maximum Entropy RL, and constitutes the core theoretical contribution of this section: \emph{reference-free optimization arises naturally from the partition-function cancellation, independent of any further design choices}.

To recover the SimPO training objective used in practice, we introduce two additional components. We emphasize that these components are \emph{not} consequences of the Maximum Entropy RL derivation; rather, they are post-hoc empirical modifications motivated by structural considerations in preference learning, analogous to how length-normalized DPO~\citep{meng2024simposimplepreferenceoptimization} adds length normalization on top of the standard KL-constrained derivation as a practical fix.

First, we apply length normalization by scaling the temperature with the response length. This is necessary in SimPO because, unlike KL-constrained methods such as DPO where the reference policy induces implicit regularization through cancellation of log-probabilities, SimPO optimizes raw log-likelihoods and is therefore prone to severe length exploitation without explicit normalization. Second, inspired by ${\psi}PO$~\citep{azar2023generaltheoreticalparadigmunderstand}, we incorporate a target reward margin $\gamma > 0$ as a fixed substitute for the adaptive margin implicitly provided by the reference policy in DPO. These additions shape the optimization landscape and improve empirical stability, but do not alter the underlying connection between SimPO and Maximum Entropy RL established above. This leads to the SimPO objective for a parametric policy $\pi_\theta$:

\begin{align*}
\mathcal{L}_{\text{SimPO}}&(\pi_\theta) = -\mathbb{E}_{(x,y_w,y_l)\sim \mathcal{D}} \bigg[ \log \sigma\bigg( \nonumber \\
&\quad \frac{\alpha}{|y_w|} \log \pi_\theta(y_w|x) \nonumber \\
&\quad - \frac{\alpha}{|y_l|} \log \pi_\theta(y_l|x) - \gamma\bigg) \bigg]
\end{align*}
This derivation reveals that SimPO is equivalent to Maximum Entropy RL under the contextual bandit formulation, augmented with length normalization and a target margin, making explicit the theoretical connection that underlies SimPO's design.

\begin{figure*}[t]
    \centering
    \includegraphics[width=0.8\textwidth]{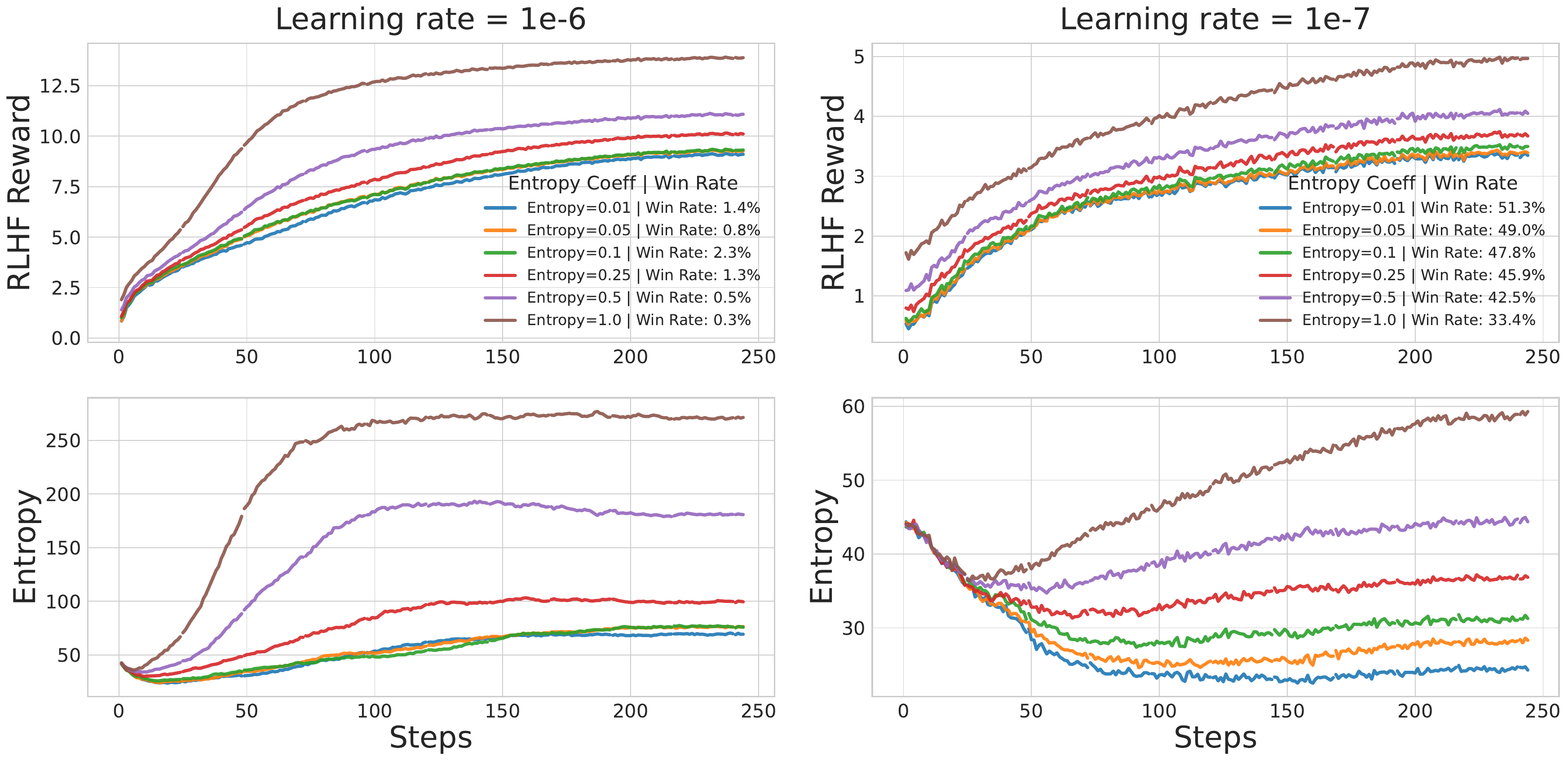}
    \caption{RLHF reward and entropy bonus during training for Pythia 1B with different entropy coefficients at learning rates 1e-6 (left) and 1e-7 (right). Win rates are reported in the legend for each entropy bonus coefficient setting.}
    \label{fig:pythia1b}
\end{figure*}

\paragraph{Theoretical Guarantees.} Following the same theoretical framework as DPO, SimPO inherits analogous guarantees regarding representational completeness, equivalence class preservation, and consistency under the Bradley-Terry preference model. The detailed proofs and formal statements of these properties are provided in Appendix~\ref{app:mathderivations}.

\section{Maximum Entropy RLHF}
Having established the theoretical connection between SimPO and Maximum Entropy RL, we now turn to the online RLHF setting. Our goal is to evaluate whether Maximum Entropy RL can perform comparably to its KL-constrained counterpart when applied directly to preference optimization.

\subsection{Experimental Setup and Methodology}

We conduct experiments using 1B, 2.8B, and 6.9B parameter models from the Pythia suite~\citep{biderman2023pythiasuiteanalyzinglarge}, trained with RLOO~\citep{ahmadian2024basicsrevisitingreinforcestyle} on the TL;DR summarization dataset~\citep{stiennon2022learningsummarizehumanfeedback}. All experiments follow the RLHF training recipe described by \citet{huang2024nimplementationdetailsrlhf} and are implemented using the TRL library~\citep{vonwerra2022trl}. Model alignment is evaluated using simulated preference win rates computed with GPT-4o-mini~\citep{openai2024gpt4ocard} as a proxy evaluator, measured against reference summaries for TL;DR using greedy decoding unless otherwise stated. Our experimental protocol closely follows the setup of \citet{rafailov2024scalinglawsrewardmodel}, enabling direct comparison with prior work. Complete hyperparameter settings are provided in Appendix~\ref{app:hyperparameters}.

Our aim is to study overoptimization phenomena in online RLHF, which are difficult to quantify reliably in more complex training setups involving larger models, diverse tasks, or heterogeneous reward signals. The TL;DR benchmark with Pythia models provides a controlled setting in which overoptimization can be measured consistently through preference win rates against fixed reference summaries, allowing systematic comparison across optimization objectives and model scales.

We adopt RLOO as a critic-free alternative to the standard RLHF pipeline while optimizing equivalent reward objectives. In the KL-constrained formulation, the reward is defined as
\begin{align*}
r(x, y) = r_\phi(x, y) \\
- \beta \Big( \log \pi_\theta(y|x) - \log \pi_{\text{ref}}(y|x) \Big),
\end{align*}
whereas in the Maximum Entropy formulation, the reward takes the form
\begin{equation}\label{eq:maxent_reward_unnormalized}
r(x, y) = r_\phi(x, y) - \alpha \log \pi_\theta(y|x).
\end{equation}

\begin{figure*}[t]
    \centering
    \includegraphics[width=0.8\textwidth]{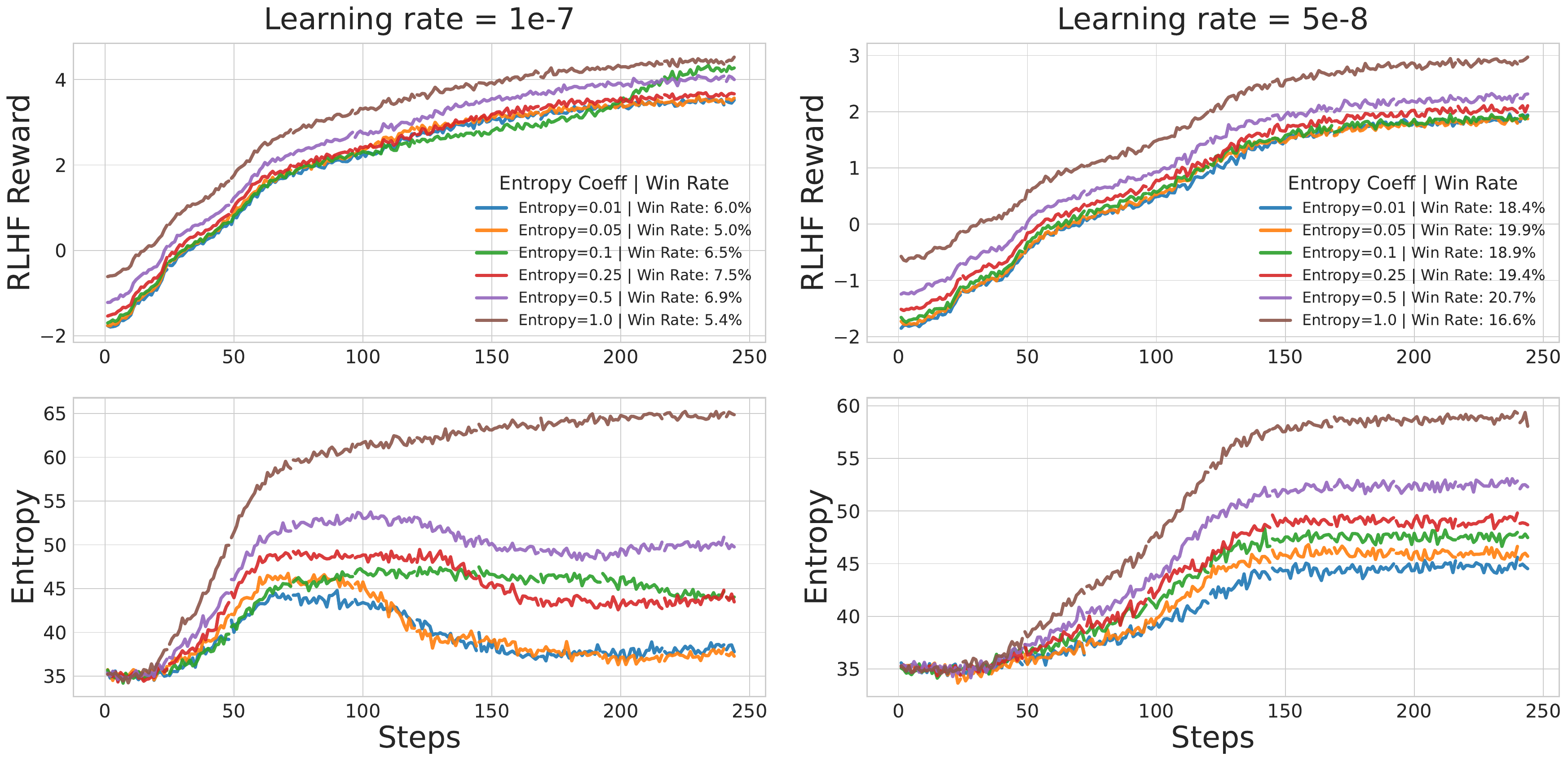}
    \caption{RLHF reward and entropy bonus during training for Pythia 2.8B with different entropy bonus coefficients at learning rates 1e-7 (left) and 5e-8 (right). Win rates are reported in the legend for each entropy bonus coefficient setting.}
    \label{fig:pythia2.8b}
\end{figure*}

When applying Maximum Entropy RL to sequence generation, the entropy term can be trivially increased by producing longer responses. Unlike KL-constrained objectives, which include an explicit reference policy that naturally counteracts such length effects, the reference-free formulation lacks an inherent mechanism to penalize verbosity. We therefore employ length normalization in the entropy term to ensure that the reward contribution is comparable across responses of different lengths and to prevent systematic length exploitation during optimization. This yields the reward
\begin{equation}\label{eq:maxent_reward_normalized}
r(x, y) = r_\phi(x, y) - \frac{\beta}{|y|} \log \pi_\theta(y|x).
\end{equation}

\paragraph{Notation bridge.} The temperature coefficient $\alpha$ used in the Maximum Entropy derivations of Section~3 (Equations~\ref{equat:maxentrl}--\ref{eq:maxent_reward_unnormalized}) corresponds to $\beta$ once length normalization is introduced (Equation~\ref{eq:maxent_reward_normalized}); the ``Entropy Coeff'' values in our figures correspond to $\beta$. We use $\beta$ throughout for consistency with the SimPO and RLHF literature.

\paragraph{Implementation of entropy regularization.} We implement entropy regularization as reward shaping (Equation~\ref{eq:maxent_reward_normalized}), \emph{not} as an additional PPO entropy bonus. This mirrors the standard KL-constrained RLHF implementation~\citep{huang2024nimplementationdetailsrlhf,vonwerra2022trl}, where the KL penalty is similarly folded into the per-step reward rather than applied as a separate loss term. KL-constrained and Maximum Entropy runs therefore share the same optimizer, hyperparameter regime, and shaping mechanism, differing only in whether the shaping term references an external policy.

We also attempted to extend evaluation to Anthropic-HH~\citep{bai2022traininghelpfulharmlessassistant}. However, we were unable to reproduce DPO baselines from the official repository, and the TRL implementation exhibited similar instability, with DPO failing to outperform SFT under high run-to-run variance. We therefore restrict our analysis to TL;DR.

\begin{figure*}[t]
    \centering
    
    \includegraphics[width=0.8\textwidth]{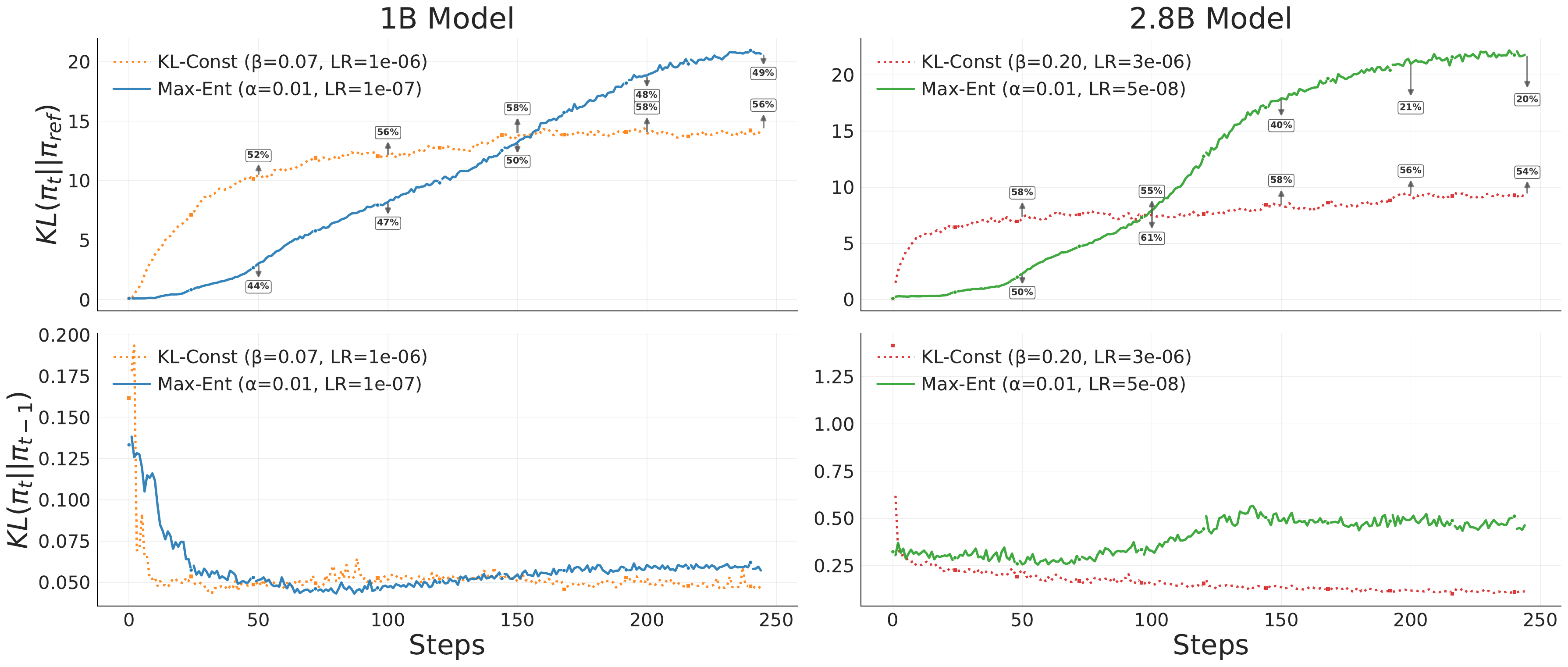}
    \caption{KL divergence metrics for KL-constrained and Maximum Entropy RL across training. 
    Top row shows KL divergence between the current policy and the reference policy ($KL(\pi_t \Vert \pi_{\text{ref}})$) 
    for Pythia 1B (left) and 2.8B (right). 
    Bottom row shows KL divergence between consecutive policy iterations ($KL(\pi_t \Vert \pi_{t-1})$).}
    \label{fig:conseckl}
\end{figure*}
\subsection{Results and Analysis}

\paragraph{Online Maximum Entropy RLHF across model scales.}

We evaluate Maximum Entropy RL in an online RLHF setting using Pythia models at three scales: 1B, 2.8B, and 6.9B parameters. Learning rates are chosen based on prior work, with $1 \times 10^{-6}$ corresponding to a regime where KL-constrained RLHF performs well, and smaller learning rates following common practice in reference-free methods. Entropy coefficients are selected via grid search; the full sweep is reported in Appendix~\ref{app:hyperparameters}.

Across all model scales, Maximum Entropy RL is highly sensitive to the learning rate. At $1 \times 10^{-6}$, all models rapidly enter overoptimized regimes regardless of the entropy coefficient. Lower learning rates improve stability, but do not fully eliminate this behavior.

For Pythia 1B, reducing the learning rate to $1 \times 10^{-7}$ yields stable training and improved win rates relative to the SFT baseline (Figure~\ref{fig:pythia1b}). However, these gains are not driven by entropy regularization: similar performance is achieved even when the entropy coefficient is set to zero. Moreover, stable runs exhibit decaying entropy bonuses, while overoptimized runs show increasing entropy bonuses, suggesting that entropy correlates with reward hacking rather than preventing it---a correlational claim, with converging evidence in Figure~\ref{fig:conseckl}, Section~\ref{sec:minentropy}, and the PPO-clipping controls below.

\paragraph{Scaling behavior.}

For Pythia 2.8B, we restrict experiments to learning rates of $1 \times 10^{-7}$ and $5 \times 10^{-8}$, as higher learning rates consistently caused rapid overoptimization. Despite these conservative settings, all Maximum Entropy variants overoptimize and fail to outperform the SFT baseline (Figure~\ref{fig:pythia2.8b}). In contrast, KL-constrained RLHF remains stable and effective at a higher learning rate of $1 \times 10^{-6}$.

Interestingly, Pythia~6.9B does not exhibit behavior intermediate between smaller and larger models; instead, its training dynamics more closely resemble those of Pythia~1B than Pythia~2.8B. Training remains stable at a learning rate of $1 \times 10^{-7}$ (Figure~\ref{fig:pythia6.9b-1e-7}), but increasing the learning rate to $1 \times 10^{-6}$ (Figure~\ref{fig:pythia6.9B-1e-6-resuts}) leads to overoptimization, mirroring the instability observed in smaller models. However, KL-constrained RLHF remains stable for Pythia~6.9B while consuming only a small KL budget (Figure~\ref{fig:pythia6.9b-kl}), indicating that effective KL budgets vary across model scales and are not reliably controlled by reference-free objectives. The non-monotonic pattern (1B and 6.9B trained more stably than 2.8B) is striking, though single-seed runs cannot fully separate it from run-to-run variance.

\paragraph{KL budget and optimization dynamics.}

To better understand these failures, we analyze KL divergence during training. Well-tuned KL-constrained RLHF exhibits an initial growth phase followed by slow KL increase, keeping the policy close to the reference model. This behavior is sensitive to the KL coefficient, but provides a reliable mechanism for controlling optimization.

In contrast, Maximum Entropy RL exhibits fragile KL behavior across model scales. For Pythia~1B and 6.9B, KL divergence grows approximately linearly without immediate collapse, whereas for Pythia~2.8B similar KL magnitudes correspond to severe overoptimization. This indicates that effective KL budgets are inherently model-dependent, and that reference-free objectives lack a mechanism to infer or enforce an appropriate optimization budget. As a result, their success is not predictable across model scales.

\paragraph{KL update magnitudes and optimization stability.}

To disentangle whether overoptimization arises from the objective or from PPO-specific dynamics, we examine the KL divergence between consecutive policy updates. Figures~\ref{fig:conseckl},~\ref{fig:pythia6.9B-1e-6-resuts},~\ref{fig:pythia6.9b-1e-7} show that PPO is not the culprit: in KL-constrained runs, PPO reliably maintains bounded KL updates between successive policies, even in regimes that eventually overoptimize when the KL coefficient is relaxed. In contrast, Maximum Entropy runs consistently exhibit increasing KL drift between updates, including in runs that appear well-behaved early in training, and this effect intensifies despite substantially lower learning rates.

These observations indicate the instability is objective-driven, and the underlying asymmetry is structural. Both shaping terms depend on $\pi_\theta$, but they behave differently when the policy drifts: as $\pi_\theta$ deviates from the reference, the KL term grows and pulls back toward $\piref$, whereas the entropy term \emph{increases} the effective reward whenever $\pi_\theta(y|x)$ shrinks, providing no analogous restoring force. Reference-free objectives therefore lack the structural safeguards needed to control update magnitudes in online RLHF, and tightening PPO clipping alone cannot compensate: for Pythia~2.8B, even clipping ranges as small as $10^{-4}$ still produced increasing KL drift and eventual overoptimization. Figure~\ref{fig:conseckl} shows this asymmetry directly: with the pipeline held fixed, only the Maximum Entropy runs exhibit escalating consecutive-update KL drift.

\subsubsection{Minimum Entropy RL}\label{sec:minentropy}
Motivated by the link between maximum entropy and overoptimization, and by recent work showing entropy minimization can serve as an effective reward signal for LLM reasoning~\citep{agarwal2025unreasonableeffectivenessentropyminimization}, we adopt an unconventional strategy: \emph{minimizing} entropy by flipping the sign of the entropy term in Equation~\ref{eq:maxent_reward_normalized}. This also serves as a sign-flip ablation that holds the optimizer, learning rate, and pipeline fixed, varying only the sign of the regularizer; if overoptimization in MaxEnt were a generic instability of reward shaping, it would persist under this flip.

For Pythia-1B, Minimum Entropy RL prevents overoptimization and achieves competitive win rates even at the higher learning rate of $1 \times 10^{-6}$, under which Maximum Entropy collapses. For Pythia-2.8B, however, no coefficient we tested produced stable improvement: small values stall learning, while larger values lead to overoptimization comparable to Maximum Entropy. The disappearance of the failure mode at 1B---but not at 2.8B---suggests that the entropy term itself contributes to the instability, while also showing that simply flipping its sign is not a scale-robust fix. More broadly, reference-free methods break down once they leave a healthy KL budget, regardless of the sign of the regularizer. Full formulation, sweep coefficients  are in Appendix~\ref{app:minentropy_curves}.

\subsection{Offline Maximum Entropy RLHF (SimPO)}

Although Maximum Entropy RL fails in online settings, its closed-form offline solution, SimPO, is both effective and empirically competitive. This is not solely due to conservative learning rate---one SimPO configuration for Llama~3~\citep{grattafiori2024llama3herdmodels} uses a higher learning rate than DPO---though controlling the KL divergence remains a universal requirement across alignment methods~\citep{gao2022scalinglawsrewardmodel,rafailov2024scalinglawsrewardmodel}.

A common argument for offline stability is that training samples remain in-distribution, removing the need for explicit OOD regularization. However, the reward model can drift out of distribution during optimization, leading to sub-epoch overoptimization~\citep{azar2023generaltheoreticalparadigmunderstand,rafailov2024scalinglawsrewardmodel}; \citet{huang2025correctingmythosklregularizationdirect} propose $\chi^2$ regularization to inject pessimism, though we were unable to reproduce these results. SimPO is intriguing in this context: it achieves strong results without enforcing pessimism or referencing an external policy.

To better understand this behavior, consider the pairwise reward used in DPO:
\begin{align*}
r(y_w|x) - r(y_l|x)
&= \beta \Bigg(
\log \frac{\pi(y_w|x)}{\pi(y_l|x)}
\nonumber \\
&\quad
- \log \frac{\pi_{\text{ref}}(y_w|x)}{\pi_{\text{ref}}(y_l|x)}
\Bigg),
\end{align*}
where the second term reflects the contribution of the reference model. Because both $y_w$ and $y_l$ are sampled from the reference distribution, this term is typically negative but small, acting as an \emph{adaptive regularizer}. This behavior parallels the role of the target margin in SimPO, with the key distinction that SimPO employs a fixed margin rather than a reference-dependent one~\citep{ahrabian2025practicalanalysishumanalignment}.

\begin{figure}
  \centering
  \includegraphics[width=0.4\textwidth]{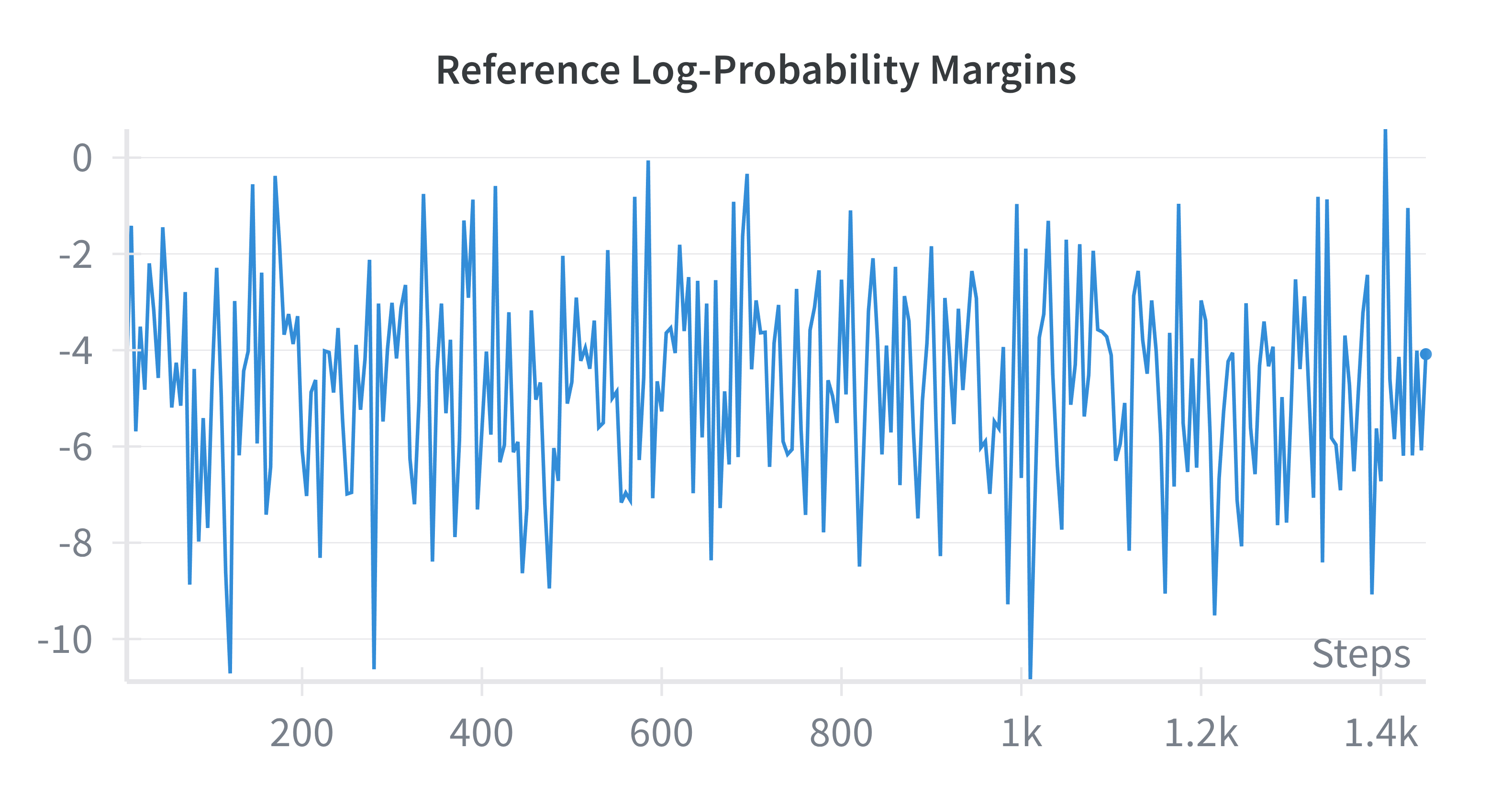}
  \caption{Batch average of
  $
  \log\!\left(
  \frac{\pi_{\text{ref}}(y_w|x)}
       {\pi_{\text{ref}}(y_l|x)}
  \right)
  $
  during DPO training on Pythia~1B.}
  \label{fig:reflog}
\end{figure}

To examine this connection empirically, Figure~\ref{fig:reflog} visualizes the reference log-probability margins $\log\!\left(\pi_{\text{ref}}(y_w|x) / \pi_{\text{ref}}(y_l|x)\right)$ during DPO training with Pythia~1B. They lie within a relatively narrow range, consistent with the fixed margins used in reference-free methods such as SimPO, suggesting that in offline settings reference models may primarily provide a bounded margin signal rather than strong distributional control.

Several caveats apply. Large margins with high learning rates drive aggressive separation, exacerbating overoptimization~\citep{rafailov2024scalinglawsrewardmodel}, consistent with the extreme likelihood decreases we observe. Reference models may implicitly adapt margins across samples (smaller for easy examples, larger for hard), an adaptivity fixed margins lack. Reference models are thus neither necessary nor sufficient to prevent offline overoptimization; stability emerges from the interaction between learning rate, margin, and model capacity (Appendix~\ref{app:simpo_and_margins}).

\section{Conclusion}

This work establishes a theoretical foundation for SimPO by deriving it as a Maximum Entropy RL solution, and uses this connection to expose a sharp offline–online asymmetry. SimPO excels offline, but online Maximum Entropy RL frequently overoptimizes: entropy regularization correlates with reward hacking rather than preventing it, and unlike the KL penalty it provides no restoring force when the policy drifts. SimPO's offline success therefore appears to rest on implicit stabilizers—bounded dataset margins and a fixed target margin—that substitute for the role a reference model plays online, but do not transfer once optimization is no longer constrained to in-distribution samples. Any successful reference-free online method will likely need to recover this stabilizing structure, whether through lightweight margin-based constraints, entropy scheduling, or other mechanisms that implicitly bound the KL budget.

\section{Limitations}\label{sec:limitations}
Our experiments use TL;DR with the Pythia suite. Across three model scales, six entropy coefficients, and 40+ configurations, the findings establish behavior in this controlled setup but not transfer to other tasks (e.g., instruction-following, reasoning) or architectures (e.g., Llama); broader sweeps were prevented by the cost of online RLHF. We use GPT-4o-mini rather than human preferences, though our central claims rest on training metrics rather than win rates. Compute also limited us to single-seed runs; we mitigate this by reporting effects that are large and qualitatively consistent (Figure~\ref{fig:conseckl}; Pythia~2.8B collapsing across all six coefficients), but multi-seed validation would strengthen the claims. Our mechanistic argument is supported by converging evidence (sign-flip ablation, PPO-clipping controls, fixed-pipeline KL comparison) but remains correlational.

\section{Ethical considerations}
This work focuses on the theoretical and empirical analysis of reinforcement learning objectives for aligning large language models. All experiments were conducted on publicly available preference datasets, and no personally identifiable or sensitive information was used. Our results are intended to improve the understanding of alignment methods and do not involve deployment of models in real-world settings. Nevertheless, as with all research on large language models, advances in alignment can have dual-use implications: while they may contribute to safer and more reliable AI systems, they could also lower barriers to developing more capable models that might be misused. We encourage responsible use and further investigation into the societal impacts of alignment research.

\section{The Use of Large Language Models}
All text was initially drafted by the authors, after which Large Language Models were employed to refine phrasing and enhance clarity of expression.

\section*{Acknowledgements}

This work was funded by KUIS AI Center, Koç University.


\bibliography{custom}

@misc{ziegler2020finetuninglanguagemodelshuman,
      title={Fine-Tuning Language Models from Human Preferences}, 
      author={Daniel M. Ziegler and Nisan Stiennon and Jeffrey Wu and Tom B. Brown and Alec Radford and Dario Amodei and Paul Christiano and Geoffrey Irving},
      year={2020},
      eprint={1909.08593},
      archivePrefix={arXiv},
      primaryClass={cs.CL},
      url={https://arxiv.org/abs/1909.08593}, 
}

@misc{stiennon2022learningsummarizehumanfeedback,
      title={Learning to summarize from human feedback}, 
      author={Nisan Stiennon and Long Ouyang and Jeff Wu and Daniel M. Ziegler and Ryan Lowe and Chelsea Voss and Alec Radford and Dario Amodei and Paul Christiano},
      year={2022},
      eprint={2009.01325},
      archivePrefix={arXiv},
      primaryClass={cs.CL},
      url={https://arxiv.org/abs/2009.01325}, 
}

@misc{bai2022traininghelpfulharmlessassistant,
      title={Training a Helpful and Harmless Assistant with Reinforcement Learning from Human Feedback}, 
      author={Yuntao Bai and Andy Jones and Kamal Ndousse and Amanda Askell and Anna Chen and Nova DasSarma and Dawn Drain and Stanislav Fort and Deep Ganguli and Tom Henighan and Nicholas Joseph and Saurav Kadavath and Jackson Kernion and Tom Conerly and Sheer El-Showk and Nelson Elhage and Zac Hatfield-Dodds and Danny Hernandez and Tristan Hume and Scott Johnston and Shauna Kravec and Liane Lovitt and Neel Nanda and Catherine Olsson and Dario Amodei and Tom Brown and Jack Clark and Sam McCandlish and Chris Olah and Ben Mann and Jared Kaplan},
      year={2022},
      eprint={2204.05862},
      archivePrefix={arXiv},
      primaryClass={cs.CL},
      url={https://arxiv.org/abs/2204.05862}, 
}

@misc{schulman2017proximalpolicyoptimizationalgorithms,
      title={Proximal Policy Optimization Algorithms}, 
      author={John Schulman and Filip Wolski and Prafulla Dhariwal and Alec Radford and Oleg Klimov},
      year={2017},
      eprint={1707.06347},
      archivePrefix={arXiv},
      primaryClass={cs.LG},
      url={https://arxiv.org/abs/1707.06347}, 
}

@misc{ouyang2022traininglanguagemodelsfollow,
      title={Training language models to follow instructions with human feedback}, 
      author={Long Ouyang and Jeff Wu and Xu Jiang and Diogo Almeida and Carroll L. Wainwright and Pamela Mishkin and Chong Zhang and Sandhini Agarwal and Katarina Slama and Alex Ray and John Schulman and Jacob Hilton and Fraser Kelton and Luke Miller and Maddie Simens and Amanda Askell and Peter Welinder and Paul Christiano and Jan Leike and Ryan Lowe},
      year={2022},
      eprint={2203.02155},
      archivePrefix={arXiv},
      primaryClass={cs.CL},
      url={https://arxiv.org/abs/2203.02155}, 
}

@article{19ff28b9-64f9-3656-ba40-08326a05748e,
 ISSN = {00063444, 14643510},
 URL = {http://www.jstor.org/stable/2334029},
 author = {Ralph Allan Bradley and Milton E. Terry},
 journal = {Biometrika},
 number = {3/4},
 pages = {324--345},
 publisher = {[Oxford University Press, Biometrika Trust]},
 title = {Rank Analysis of Incomplete Block Designs: I. The Method of Paired Comparisons},
 urldate = {2025-07-28},
 volume = {39},
 year = {1952}
}

@misc{rafailov2024directpreferenceoptimizationlanguage,
      title={Direct Preference Optimization: Your Language Model is Secretly a Reward Model}, 
      author={Rafael Rafailov and Archit Sharma and Eric Mitchell and Stefano Ermon and Christopher D. Manning and Chelsea Finn},
      year={2024},
      eprint={2305.18290},
      archivePrefix={arXiv},
      primaryClass={cs.LG},
      url={https://arxiv.org/abs/2305.18290}, 
}

@misc{meng2024simposimplepreferenceoptimization,
      title={SimPO: Simple Preference Optimization with a Reference-Free Reward}, 
      author={Yu Meng and Mengzhou Xia and Danqi Chen},
      year={2024},
      eprint={2405.14734},
      archivePrefix={arXiv},
      primaryClass={cs.CL},
      url={https://arxiv.org/abs/2405.14734}, 
}

@inproceedings{ziebart2008maximum,
  title={Maximum entropy inverse reinforcement learning.},
  author={Ziebart, Brian D and Maas, Andrew L and Bagnell, J Andrew and Dey, Anind K and others},
  booktitle={Aaai},
  volume={8},
  pages={1433--1438},
  year={2008},
  organization={Chicago, IL, USA}
}

@inproceedings{toussaint2009robot,
  title={Robot trajectory optimization using approximate inference},
  author={Toussaint, Marc},
  booktitle={Proceedings of the 26th annual international conference on machine learning},
  pages={1049--1056},
  year={2009}
}

@article{rawlik2013stochastic,
  title={On stochastic optimal control and reinforcement learning by approximate inference},
  author={Rawlik, Konrad and Toussaint, Marc and Vijayakumar, Sethu},
  year={2013}
}

@article{fox2015taming,
  title={Taming the noise in reinforcement learning via soft updates},
  author={Fox, Roy and Pakman, Ari and Tishby, Naftali},
  journal={arXiv preprint arXiv:1512.08562},
  year={2015}
}

@article{abdolmaleki2018maximum,
  title={Maximum a posteriori policy optimisation},
  author={Abdolmaleki, Abbas and Springenberg, Jost Tobias and Tassa, Yuval and Munos, Remi and Heess, Nicolas and Riedmiller, Martin},
  journal={arXiv preprint arXiv:1806.06920},
  year={2018}
}

@inproceedings{hazan2019provably,
  title={Provably efficient maximum entropy exploration},
  author={Hazan, Elad and Kakade, Sham and Singh, Karan and Van Soest, Abby},
  booktitle={International Conference on Machine Learning},
  pages={2681--2691},
  year={2019},
  organization={PMLR}
}

@inproceedings{mazoure2020leveraging,
  title={Leveraging exploration in off-policy algorithms via normalizing flows},
  author={Mazoure, Bogdan and Doan, Thang and Durand, Audrey and Pineau, Joelle and Hjelm, R Devon},
  booktitle={Conference on Robot Learning},
  pages={430--444},
  year={2020},
  organization={PMLR}
}

@article{han2021max,
  title={A max-min entropy framework for reinforcement learning},
  author={Han, Seungyul and Sung, Youngchul},
  journal={Advances in Neural Information Processing Systems},
  volume={34},
  pages={25732--25745},
  year={2021}
}

@article{grunwald2004game,
  title={Game theory, maximum entropy, minimum discrepancy and robust Bayesian decision theory},
  author={Gr{\"u}nwald, Peter D and Dawid, A Philip},
  year={2004}
}

@inproceedings{ziebart2010maximum,
  title={Maximum causal entropy correlated equilibria for Markov games},
  author={Ziebart, Brian D and Bagnell, Drew and Dey, Anind K},
  booktitle={Workshops at the Twenty-Fourth AAAI Conference on Artificial Intelligence},
  year={2010}
}

@InProceedings{sac,
  title = 	 {Soft Actor-Critic: Off-Policy Maximum Entropy Deep Reinforcement Learning with a Stochastic Actor},
  author = 	 {Haarnoja, Tuomas and Zhou, Aurick and Abbeel, Pieter and Levine, Sergey},
  booktitle = 	 {Proceedings of the 35th International Conference on Machine Learning},
  pages = 	 {1861--1870},
  year = 	 {2018},
  volume = 	 {80},
  series = 	 {Proceedings of Machine Learning Research},
  address = 	 {Stockholmsmässan, Stockholm Sweden},
  month = 	 {10--15 Jul},
  publisher = 	 {PMLR},
  pdf = 	 {http://proceedings.mlr.press/v80/haarnoja18b/haarnoja18b.pdf}
}

@article{o2016combining,
  title={Combining policy gradient and Q-learning},
  author={O'Donoghue, Brendan and Munos, Remi and Kavukcuoglu, Koray and Mnih, Volodymyr},
  journal={arXiv preprint arXiv:1611.01626},
  year={2016}
}

@article{levine2018reinforcement,
  title={Reinforcement learning and control as probabilistic inference: Tutorial and review},
  author={Levine, Sergey},
  journal={arXiv preprint arXiv:1805.00909},
  year={2018}
}

@article{eysenbach2021maximum,
  title={Maximum entropy RL (provably) solves some robust RL problems},
  author={Eysenbach, Benjamin and Levine, Sergey},
  journal={arXiv preprint arXiv:2103.06257},
  year={2021}
}

@inproceedings{kim2023adaptive,
  title={An adaptive entropy-regularization framework for multi-agent reinforcement learning},
  author={Kim, Woojun and Sung, Youngchul},
  booktitle={International Conference on Machine Learning},
  pages={16829--16852},
  year={2023},
  organization={PMLR}
}

@inproceedings{ahmed2019understanding,
  title={Understanding the impact of entropy on policy optimization},
  author={Ahmed, Zafarali and Le Roux, Nicolas and Norouzi, Mohammad and Schuurmans, Dale},
  booktitle={International conference on machine learning},
  pages={151--160},
  year={2019},
  organization={PMLR}
}

@book{ziebart2010modeling,
  title={Modeling purposeful adaptive behavior with the principle of maximum causal entropy},
  author={Ziebart, Brian D},
  year={2010},
  publisher={Carnegie Mellon University}
}

@misc{zhang2025maximumentropymisleadspolicy,
      title={When Maximum Entropy Misleads Policy Optimization}, 
      author={Ruipeng Zhang and Ya-Chien Chang and Sicun Gao},
      year={2025},
      eprint={2506.05615},
      archivePrefix={arXiv},
      primaryClass={cs.LG},
      url={https://arxiv.org/abs/2506.05615}, 
}

@misc{xie2024exploratorypreferenceoptimizationharnessing,
      title={Exploratory Preference Optimization: Harnessing Implicit Q*-Approximation for Sample-Efficient RLHF}, 
      author={Tengyang Xie and Dylan J. Foster and Akshay Krishnamurthy and Corby Rosset and Ahmed Awadallah and Alexander Rakhlin},
      year={2024},
      eprint={2405.21046},
      archivePrefix={arXiv},
      primaryClass={cs.LG},
      url={https://arxiv.org/abs/2405.21046}, 
}

@misc{rafailov2024rqlanguagemodel,
      title={From $r$ to $Q^*$: Your Language Model is Secretly a Q-Function}, 
      author={Rafael Rafailov and Joey Hejna and Ryan Park and Chelsea Finn},
      year={2024},
      eprint={2404.12358},
      archivePrefix={arXiv},
      primaryClass={cs.LG},
      url={https://arxiv.org/abs/2404.12358}, 
}

@article{Elwood_2023,
   title={Maximum Entropy Exploration in Contextual Bandits with Neural Networks and Energy Based Models},
   volume={25},
   ISSN={1099-4300},
   url={http://dx.doi.org/10.3390/e25020188},
   DOI={10.3390/e25020188},
   number={2},
   journal={Entropy},
   publisher={MDPI AG},
   author={Elwood, Adam and Leonardi, Marco and Mohamed, Ashraf and Rozza, Alessandro},
   year={2023},
   month=jan, pages={188} }

@misc{yuan2023rrhfrankresponsesalign,
      title={RRHF: Rank Responses to Align Language Models with Human Feedback without tears}, 
      author={Zheng Yuan and Hongyi Yuan and Chuanqi Tan and Wei Wang and Songfang Huang and Fei Huang},
      year={2023},
      eprint={2304.05302},
      archivePrefix={arXiv},
      primaryClass={cs.CL},
      url={https://arxiv.org/abs/2304.05302}, 
}

@misc{dong2023raftrewardrankedfinetuning,
      title={RAFT: Reward rAnked FineTuning for Generative Foundation Model Alignment}, 
      author={Hanze Dong and Wei Xiong and Deepanshu Goyal and Yihan Zhang and Winnie Chow and Rui Pan and Shizhe Diao and Jipeng Zhang and Kashun Shum and Tong Zhang},
      year={2023},
      eprint={2304.06767},
      archivePrefix={arXiv},
      primaryClass={cs.LG},
      url={https://arxiv.org/abs/2304.06767}, 
}

@misc{hong2024orpomonolithicpreferenceoptimization,
      title={ORPO: Monolithic Preference Optimization without Reference Model}, 
      author={Jiwoo Hong and Noah Lee and James Thorne},
      year={2024},
      eprint={2403.07691},
      archivePrefix={arXiv},
      primaryClass={cs.CL},
      url={https://arxiv.org/abs/2403.07691}, 
}

@misc{xu2024thingscringeothersiterative,
      title={Some things are more CRINGE than others: Iterative Preference Optimization with the Pairwise Cringe Loss}, 
      author={Jing Xu and Andrew Lee and Sainbayar Sukhbaatar and Jason Weston},
      year={2024},
      eprint={2312.16682},
      archivePrefix={arXiv},
      primaryClass={cs.CL},
      url={https://arxiv.org/abs/2312.16682}, 
}

@misc{zhao2023slichfsequencelikelihoodcalibration,
      title={SLiC-HF: Sequence Likelihood Calibration with Human Feedback}, 
      author={Yao Zhao and Rishabh Joshi and Tianqi Liu and Misha Khalman and Mohammad Saleh and Peter J. Liu},
      year={2023},
      eprint={2305.10425},
      archivePrefix={arXiv},
      primaryClass={cs.CL},
      url={https://arxiv.org/abs/2305.10425}, 
}

@misc{xu2024contrastivepreferenceoptimizationpushing,
      title={Contrastive Preference Optimization: Pushing the Boundaries of LLM Performance in Machine Translation}, 
      author={Haoran Xu and Amr Sharaf and Yunmo Chen and Weiting Tan and Lingfeng Shen and Benjamin Van Durme and Kenton Murray and Young Jin Kim},
      year={2024},
      eprint={2401.08417},
      archivePrefix={arXiv},
      primaryClass={cs.CL},
      url={https://arxiv.org/abs/2401.08417}, 
}

@misc{liu2025amopoadaptivemultiobjectivepreference,
      title={AMoPO: Adaptive Multi-objective Preference Optimization without Reward Models and Reference Models}, 
      author={Qi Liu and Jingqing Ruan and Hao Li and Haodong Zhao and Desheng Wang and Jiansong Chen and Wan Guanglu and Xunliang Cai and Zhi Zheng and Tong Xu},
      year={2025},
      eprint={2506.07165},
      archivePrefix={arXiv},
      primaryClass={cs.LG},
      url={https://arxiv.org/abs/2506.07165}, 
}

@misc{gupta2025alphaporewardshapematters,
      title={AlphaPO: Reward Shape Matters for LLM Alignment}, 
      author={Aman Gupta and Shao Tang and Qingquan Song and Sirou Zhu and Jiwoo Hong and Ankan Saha and Viral Gupta and Noah Lee and Eunki Kim and Siyu Zhu and Parag Agrawal and Natesh Pillai and S. Sathiya Keerthi},
      year={2025},
      eprint={2501.03884},
      archivePrefix={arXiv},
      primaryClass={cs.CL},
      url={https://arxiv.org/abs/2501.03884}, 
}

@misc{sun2025robustpreferenceoptimizationdynamic,
      title={Robust Preference Optimization via Dynamic Target Margins}, 
      author={Jie Sun and Junkang Wu and Jiancan Wu and Zhibo Zhu and Xingyu Lu and Jun Zhou and Lintao Ma and Xiang Wang},
      year={2025},
      eprint={2506.03690},
      archivePrefix={arXiv},
      primaryClass={cs.CL},
      url={https://arxiv.org/abs/2506.03690}, 
}

@misc{yoon2025confpoexploitingpolicymodel,
      title={ConfPO: Exploiting Policy Model Confidence for Critical Token Selection in Preference Optimization}, 
      author={Hee Suk Yoon and Eunseop Yoon and Mark Hasegawa-Johnson and Sungwoong Kim and Chang D. Yoo},
      year={2025},
      eprint={2506.08712},
      archivePrefix={arXiv},
      primaryClass={cs.CL},
      url={https://arxiv.org/abs/2506.08712}, 
}

@misc{liu2024understandingreferencepoliciesdirect,
      title={Understanding Reference Policies in Direct Preference Optimization}, 
      author={Yixin Liu and Pengfei Liu and Arman Cohan},
      year={2024},
      eprint={2407.13709},
      archivePrefix={arXiv},
      primaryClass={cs.CL},
      url={https://arxiv.org/abs/2407.13709}, 
}

@misc{huang2024nimplementationdetailsrlhf,
      title={The N+ Implementation Details of RLHF with PPO: A Case Study on TL;DR Summarization}, 
      author={Shengyi Huang and Michael Noukhovitch and Arian Hosseini and Kashif Rasul and Weixun Wang and Lewis Tunstall},
      year={2024},
      eprint={2403.17031},
      archivePrefix={arXiv},
      primaryClass={cs.LG},
      url={https://arxiv.org/abs/2403.17031}, 
}

@misc{biderman2023pythiasuiteanalyzinglarge,
      title={Pythia: A Suite for Analyzing Large Language Models Across Training and Scaling}, 
      author={Stella Biderman and Hailey Schoelkopf and Quentin Anthony and Herbie Bradley and Kyle O'Brien and Eric Hallahan and Mohammad Aflah Khan and Shivanshu Purohit and USVSN Sai Prashanth and Edward Raff and Aviya Skowron and Lintang Sutawika and Oskar van der Wal},
      year={2023},
      eprint={2304.01373},
      archivePrefix={arXiv},
      primaryClass={cs.CL},
      url={https://arxiv.org/abs/2304.01373}, 
}

@misc{ahmadian2024basicsrevisitingreinforcestyle,
      title={Back to Basics: Revisiting REINFORCE Style Optimization for Learning from Human Feedback in LLMs}, 
      author={Arash Ahmadian and Chris Cremer and Matthias Gallé and Marzieh Fadaee and Julia Kreutzer and Olivier Pietquin and Ahmet Üstün and Sara Hooker},
      year={2024},
      eprint={2402.14740},
      archivePrefix={arXiv},
      primaryClass={cs.LG},
      url={https://arxiv.org/abs/2402.14740}, 
}

@misc{openai2024gpt4ocard,
      title={GPT-4o System Card}, 
      author={OpenAI and : and Aaron Hurst and Adam Lerer and Adam P. Goucher and Adam Perelman and Aditya Ramesh and Aidan Clark and AJ Ostrow and Akila Welihinda and Alan Hayes and Alec Radford and Aleksander Mądry and Alex Baker-Whitcomb and Alex Beutel and Alex Borzunov and Alex Carney and Alex Chow and Alex Kirillov and Alex Nichol and Alex Paino and Alex Renzin and Alex Tachard Passos and Alexander Kirillov and Alexi Christakis and Alexis Conneau and Ali Kamali and Allan Jabri and Allison Moyer and Allison Tam and Amadou Crookes and Amin Tootoochian and Amin Tootoonchian and Ananya Kumar and Andrea Vallone and Andrej Karpathy and Andrew Braunstein and Andrew Cann and Andrew Codispoti and Andrew Galu and Andrew Kondrich and Andrew Tulloch and Andrey Mishchenko and Angela Baek and Angela Jiang and Antoine Pelisse and Antonia Woodford and Anuj Gosalia and Arka Dhar and Ashley Pantuliano and Avi Nayak and Avital Oliver and Barret Zoph and Behrooz Ghorbani and Ben Leimberger and Ben Rossen and Ben Sokolowsky and Ben Wang and Benjamin Zweig and Beth Hoover and Blake Samic and Bob McGrew and Bobby Spero and Bogo Giertler and Bowen Cheng and Brad Lightcap and Brandon Walkin and Brendan Quinn and Brian Guarraci and Brian Hsu and Bright Kellogg and Brydon Eastman and Camillo Lugaresi and Carroll Wainwright and Cary Bassin and Cary Hudson and Casey Chu and Chad Nelson and Chak Li and Chan Jun Shern and Channing Conger and Charlotte Barette and Chelsea Voss and Chen Ding and Cheng Lu and Chong Zhang and Chris Beaumont and Chris Hallacy and Chris Koch and Christian Gibson and Christina Kim and Christine Choi and Christine McLeavey and Christopher Hesse and Claudia Fischer and Clemens Winter and Coley Czarnecki and Colin Jarvis and Colin Wei and Constantin Koumouzelis and Dane Sherburn and Daniel Kappler and Daniel Levin and Daniel Levy and David Carr and David Farhi and David Mely and David Robinson and David Sasaki and Denny Jin and Dev Valladares and Dimitris Tsipras and Doug Li and Duc Phong Nguyen and Duncan Findlay and Edede Oiwoh and Edmund Wong and Ehsan Asdar and Elizabeth Proehl and Elizabeth Yang and Eric Antonow and Eric Kramer and Eric Peterson and Eric Sigler and Eric Wallace and Eugene Brevdo and Evan Mays and Farzad Khorasani and Felipe Petroski Such and Filippo Raso and Francis Zhang and Fred von Lohmann and Freddie Sulit and Gabriel Goh and Gene Oden and Geoff Salmon and Giulio Starace and Greg Brockman and Hadi Salman and Haiming Bao and Haitang Hu and Hannah Wong and Haoyu Wang and Heather Schmidt and Heather Whitney and Heewoo Jun and Hendrik Kirchner and Henrique Ponde de Oliveira Pinto and Hongyu Ren and Huiwen Chang and Hyung Won Chung and Ian Kivlichan and Ian O'Connell and Ian O'Connell and Ian Osband and Ian Silber and Ian Sohl and Ibrahim Okuyucu and Ikai Lan and Ilya Kostrikov and Ilya Sutskever and Ingmar Kanitscheider and Ishaan Gulrajani and Jacob Coxon and Jacob Menick and Jakub Pachocki and James Aung and James Betker and James Crooks and James Lennon and Jamie Kiros and Jan Leike and Jane Park and Jason Kwon and Jason Phang and Jason Teplitz and Jason Wei and Jason Wolfe and Jay Chen and Jeff Harris and Jenia Varavva and Jessica Gan Lee and Jessica Shieh and Ji Lin and Jiahui Yu and Jiayi Weng and Jie Tang and Jieqi Yu and Joanne Jang and Joaquin Quinonero Candela and Joe Beutler and Joe Landers and Joel Parish and Johannes Heidecke and John Schulman and Jonathan Lachman and Jonathan McKay and Jonathan Uesato and Jonathan Ward and Jong Wook Kim and Joost Huizinga and Jordan Sitkin and Jos Kraaijeveld and Josh Gross and Josh Kaplan and Josh Snyder and Joshua Achiam and Joy Jiao and Joyce Lee and Juntang Zhuang and Justyn Harriman and Kai Fricke and Kai Hayashi and Karan Singhal and Katy Shi and Kavin Karthik and Kayla Wood and Kendra Rimbach and Kenny Hsu and Kenny Nguyen and Keren Gu-Lemberg and Kevin Button and Kevin Liu and Kiel Howe and Krithika Muthukumar and Kyle Luther and Lama Ahmad and Larry Kai and Lauren Itow and Lauren Workman and Leher Pathak and Leo Chen and Li Jing and Lia Guy and Liam Fedus and Liang Zhou and Lien Mamitsuka and Lilian Weng and Lindsay McCallum and Lindsey Held and Long Ouyang and Louis Feuvrier and Lu Zhang and Lukas Kondraciuk and Lukasz Kaiser and Luke Hewitt and Luke Metz and Lyric Doshi and Mada Aflak and Maddie Simens and Madelaine Boyd and Madeleine Thompson and Marat Dukhan and Mark Chen and Mark Gray and Mark Hudnall and Marvin Zhang and Marwan Aljubeh and Mateusz Litwin and Matthew Zeng and Max Johnson and Maya Shetty and Mayank Gupta and Meghan Shah and Mehmet Yatbaz and Meng Jia Yang and Mengchao Zhong and Mia Glaese and Mianna Chen and Michael Janner and Michael Lampe and Michael Petrov and Michael Wu and Michele Wang and Michelle Fradin and Michelle Pokrass and Miguel Castro and Miguel Oom Temudo de Castro and Mikhail Pavlov and Miles Brundage and Miles Wang and Minal Khan and Mira Murati and Mo Bavarian and Molly Lin and Murat Yesildal and Nacho Soto and Natalia Gimelshein and Natalie Cone and Natalie Staudacher and Natalie Summers and Natan LaFontaine and Neil Chowdhury and Nick Ryder and Nick Stathas and Nick Turley and Nik Tezak and Niko Felix and Nithanth Kudige and Nitish Keskar and Noah Deutsch and Noel Bundick and Nora Puckett and Ofir Nachum and Ola Okelola and Oleg Boiko and Oleg Murk and Oliver Jaffe and Olivia Watkins and Olivier Godement and Owen Campbell-Moore and Patrick Chao and Paul McMillan and Pavel Belov and Peng Su and Peter Bak and Peter Bakkum and Peter Deng and Peter Dolan and Peter Hoeschele and Peter Welinder and Phil Tillet and Philip Pronin and Philippe Tillet and Prafulla Dhariwal and Qiming Yuan and Rachel Dias and Rachel Lim and Rahul Arora and Rajan Troll and Randall Lin and Rapha Gontijo Lopes and Raul Puri and Reah Miyara and Reimar Leike and Renaud Gaubert and Reza Zamani and Ricky Wang and Rob Donnelly and Rob Honsby and Rocky Smith and Rohan Sahai and Rohit Ramchandani and Romain Huet and Rory Carmichael and Rowan Zellers and Roy Chen and Ruby Chen and Ruslan Nigmatullin and Ryan Cheu and Saachi Jain and Sam Altman and Sam Schoenholz and Sam Toizer and Samuel Miserendino and Sandhini Agarwal and Sara Culver and Scott Ethersmith and Scott Gray and Sean Grove and Sean Metzger and Shamez Hermani and Shantanu Jain and Shengjia Zhao and Sherwin Wu and Shino Jomoto and Shirong Wu and Shuaiqi and Xia and Sonia Phene and Spencer Papay and Srinivas Narayanan and Steve Coffey and Steve Lee and Stewart Hall and Suchir Balaji and Tal Broda and Tal Stramer and Tao Xu and Tarun Gogineni and Taya Christianson and Ted Sanders and Tejal Patwardhan and Thomas Cunninghman and Thomas Degry and Thomas Dimson and Thomas Raoux and Thomas Shadwell and Tianhao Zheng and Todd Underwood and Todor Markov and Toki Sherbakov and Tom Rubin and Tom Stasi and Tomer Kaftan and Tristan Heywood and Troy Peterson and Tyce Walters and Tyna Eloundou and Valerie Qi and Veit Moeller and Vinnie Monaco and Vishal Kuo and Vlad Fomenko and Wayne Chang and Weiyi Zheng and Wenda Zhou and Wesam Manassra and Will Sheu and Wojciech Zaremba and Yash Patil and Yilei Qian and Yongjik Kim and Youlong Cheng and Yu Zhang and Yuchen He and Yuchen Zhang and Yujia Jin and Yunxing Dai and Yury Malkov},
      year={2024},
      eprint={2410.21276},
      archivePrefix={arXiv},
      primaryClass={cs.CL},
      url={https://arxiv.org/abs/2410.21276}, 
}

@misc{vonwerra2022trl,
  author = {Leandro von Werra and Younes Belkada and Lewis Tunstall and Edward Beeching and Tristan Thrush and Nathan Lambert and Shengyi Huang and Kashif Rasul and Quentin Gallouédec},
  title = {TRL: Transformer Reinforcement Learning},
  year = {2020},
  publisher = {GitHub},
  journal = {GitHub repository},
  howpublished = {\url{https://github.com/huggingface/trl}}
}

@misc{skalse2025definingcharacterizingrewardhacking,
      title={Defining and Characterizing Reward Hacking}, 
      author={Joar Skalse and Nikolaus H. R. Howe and Dmitrii Krasheninnikov and David Krueger},
      year={2025},
      eprint={2209.13085},
      archivePrefix={arXiv},
      primaryClass={cs.LG},
      url={https://arxiv.org/abs/2209.13085}, 
}

@misc{amodei2016concreteproblemsaisafety,
      title={Concrete Problems in AI Safety}, 
      author={Dario Amodei and Chris Olah and Jacob Steinhardt and Paul Christiano and John Schulman and Dan Mané},
      year={2016},
      eprint={1606.06565},
      archivePrefix={arXiv},
      primaryClass={cs.AI},
      url={https://arxiv.org/abs/1606.06565}, 
}

@misc{hadfieldmenell2020inverserewarddesign,
      title={Inverse Reward Design}, 
      author={Dylan Hadfield-Menell and Smitha Milli and Pieter Abbeel and Stuart Russell and Anca Dragan},
      year={2020},
      eprint={1711.02827},
      archivePrefix={arXiv},
      primaryClass={cs.AI},
      url={https://arxiv.org/abs/1711.02827}, 
}

@misc{pan2022effectsrewardmisspecificationmapping,
      title={The Effects of Reward Misspecification: Mapping and Mitigating Misaligned Models}, 
      author={Alexander Pan and Kush Bhatia and Jacob Steinhardt},
      year={2022},
      eprint={2201.03544},
      archivePrefix={arXiv},
      primaryClass={cs.LG},
      url={https://arxiv.org/abs/2201.03544}, 
}

@misc{gao2022scalinglawsrewardmodel,
      title={Scaling Laws for Reward Model Overoptimization}, 
      author={Leo Gao and John Schulman and Jacob Hilton},
      year={2022},
      eprint={2210.10760},
      archivePrefix={arXiv},
      primaryClass={cs.LG},
      url={https://arxiv.org/abs/2210.10760}, 
}

@misc{rafailov2024scalinglawsrewardmodel,
      title={Scaling Laws for Reward Model Overoptimization in Direct Alignment Algorithms}, 
      author={Rafael Rafailov and Yaswanth Chittepu and Ryan Park and Harshit Sikchi and Joey Hejna and Bradley Knox and Chelsea Finn and Scott Niekum},
      year={2024},
      eprint={2406.02900},
      archivePrefix={arXiv},
      primaryClass={cs.LG},
      url={https://arxiv.org/abs/2406.02900}, 
}

@misc{christiano2023deepreinforcementlearninghuman,
      title={Deep reinforcement learning from human preferences}, 
      author={Paul Christiano and Jan Leike and Tom B. Brown and Miljan Martic and Shane Legg and Dario Amodei},
      year={2023},
      eprint={1706.03741},
      archivePrefix={arXiv},
      primaryClass={stat.ML},
      url={https://arxiv.org/abs/1706.03741}, 
}

@misc{azar2023generaltheoreticalparadigmunderstand,
      title={A General Theoretical Paradigm to Understand Learning from Human Preferences}, 
      author={Mohammad Gheshlaghi Azar and Mark Rowland and Bilal Piot and Daniel Guo and Daniele Calandriello and Michal Valko and Rémi Munos},
      year={2023},
      eprint={2310.12036},
      archivePrefix={arXiv},
      primaryClass={cs.AI},
      url={https://arxiv.org/abs/2310.12036}, 
}

@misc{huang2025correctingmythosklregularizationdirect,
      title={Correcting the Mythos of KL-Regularization: Direct Alignment without Overoptimization via Chi-Squared Preference Optimization}, 
      author={Audrey Huang and Wenhao Zhan and Tengyang Xie and Jason D. Lee and Wen Sun and Akshay Krishnamurthy and Dylan J. Foster},
      year={2025},
      eprint={2407.13399},
      archivePrefix={arXiv},
      primaryClass={cs.AI},
      url={https://arxiv.org/abs/2407.13399}, 
}

@misc{grattafiori2024llama3herdmodels,
      title={The Llama 3 Herd of Models}, 
      author={Aaron Grattafiori and Abhimanyu Dubey and Abhinav Jauhri and Abhinav Pandey and Abhishek Kadian and Ahmad Al-Dahle and Aiesha Letman and Akhil Mathur and Alan Schelten and Alex Vaughan and Amy Yang and Angela Fan and Anirudh Goyal and Anthony Hartshorn and Aobo Yang and Archi Mitra and Archie Sravankumar and Artem Korenev and Arthur Hinsvark and Arun Rao and Aston Zhang and Aurelien Rodriguez and Austen Gregerson and Ava Spataru and Baptiste Roziere and Bethany Biron and Binh Tang and Bobbie Chern and Charlotte Caucheteux and Chaya Nayak and Chloe Bi and Chris Marra and Chris McConnell and Christian Keller and Christophe Touret and Chunyang Wu and Corinne Wong and Cristian Canton Ferrer and Cyrus Nikolaidis and Damien Allonsius and Daniel Song and Danielle Pintz and Danny Livshits and Danny Wyatt and David Esiobu and Dhruv Choudhary and Dhruv Mahajan and Diego Garcia-Olano and Diego Perino and Dieuwke Hupkes and Egor Lakomkin and Ehab AlBadawy and Elina Lobanova and Emily Dinan and Eric Michael Smith and Filip Radenovic and Francisco Guzmán and Frank Zhang and Gabriel Synnaeve and Gabrielle Lee and Georgia Lewis Anderson and Govind Thattai and Graeme Nail and Gregoire Mialon and Guan Pang and Guillem Cucurell and Hailey Nguyen and Hannah Korevaar and Hu Xu and Hugo Touvron and Iliyan Zarov and Imanol Arrieta Ibarra and Isabel Kloumann and Ishan Misra and Ivan Evtimov and Jack Zhang and Jade Copet and Jaewon Lee and Jan Geffert and Jana Vranes and Jason Park and Jay Mahadeokar and Jeet Shah and Jelmer van der Linde and Jennifer Billock and Jenny Hong and Jenya Lee and Jeremy Fu and Jianfeng Chi and Jianyu Huang and Jiawen Liu and Jie Wang and Jiecao Yu and Joanna Bitton and Joe Spisak and Jongsoo Park and Joseph Rocca and Joshua Johnstun and Joshua Saxe and Junteng Jia and Kalyan Vasuden Alwala and Karthik Prasad and Kartikeya Upasani and Kate Plawiak and Ke Li and Kenneth Heafield and Kevin Stone and Khalid El-Arini and Krithika Iyer and Kshitiz Malik and Kuenley Chiu and Kunal Bhalla and Kushal Lakhotia and Lauren Rantala-Yeary and Laurens van der Maaten and Lawrence Chen and Liang Tan and Liz Jenkins and Louis Martin and Lovish Madaan and Lubo Malo and Lukas Blecher and Lukas Landzaat and Luke de Oliveira and Madeline Muzzi and Mahesh Pasupuleti and Mannat Singh and Manohar Paluri and Marcin Kardas and Maria Tsimpoukelli and Mathew Oldham and Mathieu Rita and Maya Pavlova and Melanie Kambadur and Mike Lewis and Min Si and Mitesh Kumar Singh and Mona Hassan and Naman Goyal and Narjes Torabi and Nikolay Bashlykov and Nikolay Bogoychev and Niladri Chatterji and Ning Zhang and Olivier Duchenne and Onur Çelebi and Patrick Alrassy and Pengchuan Zhang and Pengwei Li and Petar Vasic and Peter Weng and Prajjwal Bhargava and Pratik Dubal and Praveen Krishnan and Punit Singh Koura and Puxin Xu and Qing He and Qingxiao Dong and Ragavan Srinivasan and Raj Ganapathy and Ramon Calderer and Ricardo Silveira Cabral and Robert Stojnic and Roberta Raileanu and Rohan Maheswari and Rohit Girdhar and Rohit Patel and Romain Sauvestre and Ronnie Polidoro and Roshan Sumbaly and Ross Taylor and Ruan Silva and Rui Hou and Rui Wang and Saghar Hosseini and Sahana Chennabasappa and Sanjay Singh and Sean Bell and Seohyun Sonia Kim and Sergey Edunov and Shaoliang Nie and Sharan Narang and Sharath Raparthy and Sheng Shen and Shengye Wan and Shruti Bhosale and Shun Zhang and Simon Vandenhende and Soumya Batra and Spencer Whitman and Sten Sootla and Stephane Collot and Suchin Gururangan and Sydney Borodinsky and Tamar Herman and Tara Fowler and Tarek Sheasha and Thomas Georgiou and Thomas Scialom and Tobias Speckbacher and Todor Mihaylov and Tong Xiao and Ujjwal Karn and Vedanuj Goswami and Vibhor Gupta and Vignesh Ramanathan and Viktor Kerkez and Vincent Gonguet and Virginie Do and Vish Vogeti and Vítor Albiero and Vladan Petrovic and Weiwei Chu and Wenhan Xiong and Wenyin Fu and Whitney Meers and Xavier Martinet and Xiaodong Wang and Xiaofang Wang and Xiaoqing Ellen Tan and Xide Xia and Xinfeng Xie and Xuchao Jia and Xuewei Wang and Yaelle Goldschlag and Yashesh Gaur and Yasmine Babaei and Yi Wen and Yiwen Song and Yuchen Zhang and Yue Li and Yuning Mao and Zacharie Delpierre Coudert and Zheng Yan and Zhengxing Chen and Zoe Papakipos and Aaditya Singh and Aayushi Srivastava and Abha Jain and Adam Kelsey and Adam Shajnfeld and Adithya Gangidi and Adolfo Victoria and Ahuva Goldstand and Ajay Menon and Ajay Sharma and Alex Boesenberg and Alexei Baevski and Allie Feinstein and Amanda Kallet and Amit Sangani and Amos Teo and Anam Yunus and Andrei Lupu and Andres Alvarado and Andrew Caples and Andrew Gu and Andrew Ho and Andrew Poulton and Andrew Ryan and Ankit Ramchandani and Annie Dong and Annie Franco and Anuj Goyal and Aparajita Saraf and Arkabandhu Chowdhury and Ashley Gabriel and Ashwin Bharambe and Assaf Eisenman and Azadeh Yazdan and Beau James and Ben Maurer and Benjamin Leonhardi and Bernie Huang and Beth Loyd and Beto De Paola and Bhargavi Paranjape and Bing Liu and Bo Wu and Boyu Ni and Braden Hancock and Bram Wasti and Brandon Spence and Brani Stojkovic and Brian Gamido and Britt Montalvo and Carl Parker and Carly Burton and Catalina Mejia and Ce Liu and Changhan Wang and Changkyu Kim and Chao Zhou and Chester Hu and Ching-Hsiang Chu and Chris Cai and Chris Tindal and Christoph Feichtenhofer and Cynthia Gao and Damon Civin and Dana Beaty and Daniel Kreymer and Daniel Li and David Adkins and David Xu and Davide Testuggine and Delia David and Devi Parikh and Diana Liskovich and Didem Foss and Dingkang Wang and Duc Le and Dustin Holland and Edward Dowling and Eissa Jamil and Elaine Montgomery and Eleonora Presani and Emily Hahn and Emily Wood and Eric-Tuan Le and Erik Brinkman and Esteban Arcaute and Evan Dunbar and Evan Smothers and Fei Sun and Felix Kreuk and Feng Tian and Filippos Kokkinos and Firat Ozgenel and Francesco Caggioni and Frank Kanayet and Frank Seide and Gabriela Medina Florez and Gabriella Schwarz and Gada Badeer and Georgia Swee and Gil Halpern and Grant Herman and Grigory Sizov and Guangyi and Zhang and Guna Lakshminarayanan and Hakan Inan and Hamid Shojanazeri and Han Zou and Hannah Wang and Hanwen Zha and Haroun Habeeb and Harrison Rudolph and Helen Suk and Henry Aspegren and Hunter Goldman and Hongyuan Zhan and Ibrahim Damlaj and Igor Molybog and Igor Tufanov and Ilias Leontiadis and Irina-Elena Veliche and Itai Gat and Jake Weissman and James Geboski and James Kohli and Janice Lam and Japhet Asher and Jean-Baptiste Gaya and Jeff Marcus and Jeff Tang and Jennifer Chan and Jenny Zhen and Jeremy Reizenstein and Jeremy Teboul and Jessica Zhong and Jian Jin and Jingyi Yang and Joe Cummings and Jon Carvill and Jon Shepard and Jonathan McPhie and Jonathan Torres and Josh Ginsburg and Junjie Wang and Kai Wu and Kam Hou U and Karan Saxena and Kartikay Khandelwal and Katayoun Zand and Kathy Matosich and Kaushik Veeraraghavan and Kelly Michelena and Keqian Li and Kiran Jagadeesh and Kun Huang and Kunal Chawla and Kyle Huang and Lailin Chen and Lakshya Garg and Lavender A and Leandro Silva and Lee Bell and Lei Zhang and Liangpeng Guo and Licheng Yu and Liron Moshkovich and Luca Wehrstedt and Madian Khabsa and Manav Avalani and Manish Bhatt and Martynas Mankus and Matan Hasson and Matthew Lennie and Matthias Reso and Maxim Groshev and Maxim Naumov and Maya Lathi and Meghan Keneally and Miao Liu and Michael L. Seltzer and Michal Valko and Michelle Restrepo and Mihir Patel and Mik Vyatskov and Mikayel Samvelyan and Mike Clark and Mike Macey and Mike Wang and Miquel Jubert Hermoso and Mo Metanat and Mohammad Rastegari and Munish Bansal and Nandhini Santhanam and Natascha Parks and Natasha White and Navyata Bawa and Nayan Singhal and Nick Egebo and Nicolas Usunier and Nikhil Mehta and Nikolay Pavlovich Laptev and Ning Dong and Norman Cheng and Oleg Chernoguz and Olivia Hart and Omkar Salpekar and Ozlem Kalinli and Parkin Kent and Parth Parekh and Paul Saab and Pavan Balaji and Pedro Rittner and Philip Bontrager and Pierre Roux and Piotr Dollar and Polina Zvyagina and Prashant Ratanchandani and Pritish Yuvraj and Qian Liang and Rachad Alao and Rachel Rodriguez and Rafi Ayub and Raghotham Murthy and Raghu Nayani and Rahul Mitra and Rangaprabhu Parthasarathy and Raymond Li and Rebekkah Hogan and Robin Battey and Rocky Wang and Russ Howes and Ruty Rinott and Sachin Mehta and Sachin Siby and Sai Jayesh Bondu and Samyak Datta and Sara Chugh and Sara Hunt and Sargun Dhillon and Sasha Sidorov and Satadru Pan and Saurabh Mahajan and Saurabh Verma and Seiji Yamamoto and Sharadh Ramaswamy and Shaun Lindsay and Shaun Lindsay and Sheng Feng and Shenghao Lin and Shengxin Cindy Zha and Shishir Patil and Shiva Shankar and Shuqiang Zhang and Shuqiang Zhang and Sinong Wang and Sneha Agarwal and Soji Sajuyigbe and Soumith Chintala and Stephanie Max and Stephen Chen and Steve Kehoe and Steve Satterfield and Sudarshan Govindaprasad and Sumit Gupta and Summer Deng and Sungmin Cho and Sunny Virk and Suraj Subramanian and Sy Choudhury and Sydney Goldman and Tal Remez and Tamar Glaser and Tamara Best and Thilo Koehler and Thomas Robinson and Tianhe Li and Tianjun Zhang and Tim Matthews and Timothy Chou and Tzook Shaked and Varun Vontimitta and Victoria Ajayi and Victoria Montanez and Vijai Mohan and Vinay Satish Kumar and Vishal Mangla and Vlad Ionescu and Vlad Poenaru and Vlad Tiberiu Mihailescu and Vladimir Ivanov and Wei Li and Wenchen Wang and Wenwen Jiang and Wes Bouaziz and Will Constable and Xiaocheng Tang and Xiaojian Wu and Xiaolan Wang and Xilun Wu and Xinbo Gao and Yaniv Kleinman and Yanjun Chen and Ye Hu and Ye Jia and Ye Qi and Yenda Li and Yilin Zhang and Ying Zhang and Yossi Adi and Youngjin Nam and Yu and Wang and Yu Zhao and Yuchen Hao and Yundi Qian and Yunlu Li and Yuzi He and Zach Rait and Zachary DeVito and Zef Rosnbrick and Zhaoduo Wen and Zhenyu Yang and Zhiwei Zhao and Zhiyu Ma},
      year={2024},
      eprint={2407.21783},
      archivePrefix={arXiv},
      primaryClass={cs.AI},
      url={https://arxiv.org/abs/2407.21783}, 
}

@misc{agarwal2025unreasonableeffectivenessentropyminimization,
      title={The Unreasonable Effectiveness of Entropy Minimization in LLM Reasoning}, 
      author={Shivam Agarwal and Zimin Zhang and Lifan Yuan and Jiawei Han and Hao Peng},
      year={2025},
      eprint={2505.15134},
      archivePrefix={arXiv},
      primaryClass={cs.LG},
      url={https://arxiv.org/abs/2505.15134}, 
}

@book{Sutton1998,
  added-at = {2019-07-13T10:11:53.000+0200},
  author = {Sutton, Richard S. and Barto, Andrew G.},
  biburl = {https://www.bibsonomy.org/bibtex/2f46601cf8b13d39d1378af0d79438b12/lanteunis},
  edition = {Second},
  interhash = {ac6b144aaec1819919a2fba9f705c852},
  intrahash = {f46601cf8b13d39d1378af0d79438b12},
  keywords = {},
  publisher = {The MIT Press},
  timestamp = {2019-07-13T10:11:53.000+0200},
  title = {Reinforcement Learning: An Introduction},
  url = {http://incompleteideas.net/book/the-book-2nd.html},
  year = {2018 }
}

@misc{ahrabian2025practicalanalysishumanalignment,
      title={A Practical Analysis of Human Alignment with *PO}, 
      author={Kian Ahrabian and Xihui Lin and Barun Patra and Vishrav Chaudhary and Alon Benhaim and Jay Pujara and Xia Song},
      year={2025},
      eprint={2407.15229},
      archivePrefix={arXiv},
      primaryClass={cs.CL},
      url={https://arxiv.org/abs/2407.15229}, 
}

@inbook{stuart,
author = {Russell, Stuart},
year = {2022},
month = {01},
pages = {19-24},
title = {Artificial Intelligence and the Problem of Control},
isbn = {978-3-030-86143-8},
doi = {10.1007/978-3-030-86144-5_3}
}

@misc{bengio2025singaporeconsensusglobalai,
      title={The Singapore Consensus on Global AI Safety Research Priorities}, 
      author={Yoshua Bengio and Tegan Maharaj and Luke Ong and Stuart Russell and Dawn Song and Max Tegmark and Lan Xue and Ya-Qin Zhang and Stephen Casper and Wan Sie Lee and Sören Mindermann and Vanessa Wilfred and Vidhisha Balachandran and Fazl Barez and Michael Belinsky and Imane Bello and Malo Bourgon and Mark Brakel and Siméon Campos and Duncan Cass-Beggs and Jiahao Chen and Rumman Chowdhury and Kuan Chua Seah and Jeff Clune and Juntao Dai and Agnes Delaborde and Nouha Dziri and Francisco Eiras and Joshua Engels and Jinyu Fan and Adam Gleave and Noah Goodman and Fynn Heide and Johannes Heidecke and Dan Hendrycks and Cyrus Hodes and Bryan Low Kian Hsiang and Minlie Huang and Sami Jawhar and Wang Jingyu and Adam Tauman Kalai and Meindert Kamphuis and Mohan Kankanhalli and Subhash Kantamneni and Mathias Bonde Kirk and Thomas Kwa and Jeffrey Ladish and Kwok-Yan Lam and Wan Lee Sie and Taewhi Lee and Xiaojian Li and Jiajun Liu and Chaochao Lu and Yifan Mai and Richard Mallah and Julian Michael and Nick Moës and Simon Möller and Kihyuk Nam and Kwan Yee Ng and Mark Nitzberg and Besmira Nushi and Seán O hÉigeartaigh and Alejandro Ortega and Pierre Peigné and James Petrie and Benjamin Prud'Homme and Reihaneh Rabbany and Nayat Sanchez-Pi and Sarah Schwettmann and Buck Shlegeris and Saad Siddiqui and Aradhana Sinha and Martín Soto and Cheston Tan and Dong Ting and William Tjhi and Robert Trager and Brian Tse and Anthony Tung K. H. and Vanessa Wilfred and John Willes and Denise Wong and Wei Xu and Rongwu Xu and Yi Zeng and HongJiang Zhang and Djordje Žikelić},
      year={2025},
      eprint={2506.20702},
      archivePrefix={arXiv},
      primaryClass={cs.AI},
      url={https://arxiv.org/abs/2506.20702}, 
}

@misc{cui2024ultrafeedbackboostinglanguagemodels,
      title={UltraFeedback: Boosting Language Models with Scaled AI Feedback}, 
      author={Ganqu Cui and Lifan Yuan and Ning Ding and Guanming Yao and Bingxiang He and Wei Zhu and Yuan Ni and Guotong Xie and Ruobing Xie and Yankai Lin and Zhiyuan Liu and Maosong Sun},
      year={2024},
      eprint={2310.01377},
      archivePrefix={arXiv},
      primaryClass={cs.CL},
      url={https://arxiv.org/abs/2310.01377}, 
}

@article{af5079a1-8ca5-3727-a405-0a82390327b7,
 ISSN = {00359254, 14679876},
 URL = {http://www.jstor.org/stable/2346567},
 abstract = {A probability distribution is defined over the r! permutations of r objects in such a way as to incorporate up to r! - 1 parameters. Problems of estimation and testing are considered. The results are applied to data on voting at elections and beanstores.},
 author = {R. L. Plackett},
 journal = {Journal of the Royal Statistical Society. Series C (Applied Statistics)},
 number = {2},
 pages = {193--202},
 publisher = {[Royal Statistical Society, Oxford University Press]},
 title = {The Analysis of Permutations},
 urldate = {2025-09-21},
 volume = {24},
 year = {1975}
}

\appendix

\section{Additional Related Work}\label{app:related_work}

\paragraph{Reference-free Alignment.}

While early methods like RRHF~\citep{yuan2023rrhfrankresponsesalign} and RAFT~\citep{dong2023raftrewardrankedfinetuning} still relied on external reward models for ranking, they revealed that complex RL dynamics were unnecessary. SLiC-HF~\citep{zhao2023slichfsequencelikelihoodcalibration} showed that sequence likelihood calibration could directly incorporate human feedback without explicit reward modeling. ORPO~\citep{hong2024orpomonolithicpreferenceoptimization} made the key insight that odds ratios could replace probability ratios, enabling monolithic training without reference model drift. CPO~\citep{xu2024contrastivepreferenceoptimizationpushing} and SimPO~\citep{meng2024simposimplepreferenceoptimization} both recognized that sequence probabilities themselves encode preference signals. SimPO can be seen as CPO's length-normalized variant with zero behavior cloning, but this seemingly minor change eliminates the need for hyperparameter tuning of the BC coefficient. The Cringe Loss~\citep{xu2024thingscringeothersiterative} explored iterative self-improvement through token-level soft margins rather than sequence-level optimization. The proliferation of SimPO variants (AlphaPO's~\citep{gupta2025alphaporewardshapematters} reward shaping, $\gamma$PO's adaptive margins~\citep{sun2025robustpreferenceoptimizationdynamic}, AMoPO's~\citep{liu2025amopoadaptivemultiobjectivepreference} multi-objective extension, ConfPO's~\citep{yoon2025confpoexploitingpolicymodel} token-level refinement) demonstrates the flexibility of SimPO's reward formulation while addressing specific optimization challenges.

\paragraph{Maximum Entropy RL Beyond RLHF.}
A broader literature studies entropy regularization in classical RL settings and provides useful context for our findings. Most directly related to our findings,\citet{zhang2025maximumentropymisleadspolicy} show that maximum entropy regularization can mislead policy optimization in continuous control, reaching conclusions that are complementary to ours: in both settings, entropy regularization can fail to deliver the stabilization properties commonly attributed to it. Our work extends this thread to RLHF, where the failure mode is reward hacking against a learned proxy rather than suboptimal exploration in a fixed reward landscape.

\paragraph{Overoptimization in Preference Learning.}
Reward hacking~\citep{skalse2025definingcharacterizingrewardhacking} is a long-standing problem in reinforcement learning~\citep{Sutton1998} where policies achieve high rewards but fail to meet the actual objective~\citep{amodei2016concreteproblemsaisafety,hadfieldmenell2020inverserewarddesign,pan2022effectsrewardmisspecificationmapping}. In language model alignment, this manifests as models learning to generate outputs that score highly on proxy metrics while being of poor actual quality. This overoptimization phenomenon was first systematically studied in traditional RLHF~\citep{christiano2023deepreinforcementlearninghuman,stiennon2022learningsummarizehumanfeedback,gao2022scalinglawsrewardmodel,ouyang2022traininglanguagemodelsfollow}, where optimizing imperfect proxy reward models leads to qualitatively worse outputs, including overly wordy responses and hallucinated information.

Direct alignment algorithms like DPO~\citep{rafailov2024directpreferenceoptimizationlanguage} were designed to bypass RL training by parameterizing rewards directly in terms of the policy, but they introduce their own form of overoptimization. \citet{azar2023generaltheoreticalparadigmunderstand} show that DPO's unbounded log-odds transformation leads to severely overfitted implicit rewards, losing the regularization benefits of standard RLHF's explicit reward modeling. They propose IPO using bounded $\Psi$ functions to address this issue. However, \citet{rafailov2024scalinglawsrewardmodel} demonstrate that even IPO, despite its theoretical guarantees against overoptimization, still exhibits similar degradation patterns to DPO and RLHF at higher KL budgets and across different model scales, suggesting that overoptimization in direct alignment algorithms may be a more fundamental issue than initially anticipated. More recently, \citet{huang2025correctingmythosklregularizationdirect} propose $\chi^2$-Preference Optimization ($\chi$PO), which replaces DPO's logarithmic link function with $\chi^2$-divergence regularization to implement pessimism under uncertainty, providing theoretical guarantees against overoptimization based on single-policy concentrability.

\section{Hyperparameter Settings}\label{app:hyperparameters}

Table~\ref{tab:hyperparameters} reports the complete set of hyperparameters used across all experiments. Training follows \citet{huang2024nimplementationdetailsrlhf} as implemented in TRL~\citep{vonwerra2022trl}, with minimal modifications to support reference-free reward shaping. Reward models are SFT-initialized linear-head models with output normalization. We use greedy decoding for evaluation unless otherwise stated.

\begin{table}[h]
\centering
\tiny
\begin{tabular}{ll}
\toprule
\textbf{Setting} & \textbf{Value} \\
\midrule
Optimizer & AdamW \\
Adam $(\beta_1, \beta_2)$ & $(0.9, 0.999)$ \\
Weight decay & $0.0$ \\
LR schedule & Linear decay \\
Warmup steps & $0$ \\
Total training steps & $\sim$ 1 epoch over TL;DR \\
Rollout batch size (1B) & 64 \\
Rollout batch size (2.8B) & 64 \\
Rollout batch size (6.9B) & 64 \\
PPO mini-batch size & 16 \\
PPO epochs / batch & 4 \\
Generation max length & 53 tokens \\
Generation temperature & 0.7 \\
PPO clip range & 0.2 (default), $10^{-4}$ in clipping ablation \\
Value clip range & 0.2 \\
KL coefficient $\beta$ (KL-RLHF) & $0.05$ \\
Entropy coefficient $\beta$ (MaxEnt) & $\{0.01, 0.05, 0.1, 0.2, 0.3, 0.5\}$ \\
Entropy coefficient $\beta$ (MinEnt) & $\{0.01, 0.05, 0.1, 0.2, 0.3, 0.5\}$ \\
LR (KL-RLHF, all scales) & $1\times 10^{-6}$ \\
LR (MaxEnt, 1B) & $\{1\times 10^{-6}, 1\times 10^{-7}\}$ \\
LR (MaxEnt, 2.8B) & $\{1\times 10^{-7}, 5\times 10^{-8}\}$ \\
LR (MaxEnt, 6.9B) & $\{1\times 10^{-6}, 1\times 10^{-7}\}$ \\
Reward normalization & Yes (running mean/std) \\
Length normalization (MaxEnt) & Yes ($\beta/|y|$) \\
Length normalization (KL) & No \\
Evaluator & GPT-4o-mini \\
Decoding for evaluation & Greedy \\
Seed & Single seed per config \\
\bottomrule
\end{tabular}
\caption{Hyperparameter settings for all experiments. Coefficients of the form $\beta/|y|$ denote length-normalized rewards as defined in Equation~\ref{eq:maxent_reward_normalized}.}
\label{tab:hyperparameters}
\end{table}

In total, the empirical study spans more than 40 training configurations across model scales, learning rates, and entropy coefficients. Each configuration corresponds to a full online RLHF training run with rollout generation at every step.

\section{Margins and Overoptimization}\label{app:simpo_and_margins}

It has been shown that methods such as SimPO can achieve performance comparable to DPO even with a target margin of $\gamma=0$, as demonstrated in the original SimPO paper. This suggests that offline methods do not necessarily require reference models when operating within the safe KL region, and that introducing margins generally improves performance across benchmarks. This effect arises from both model capabilities and dataset coverage: larger models are less prone to common overfitting behaviors and can extract more meaningful signals during optimization, rather than engaging in reward hacking, a phenomenon observed in both online and offline preference optimization~\citep{gao2022scalinglawsrewardmodel,rafailov2024directpreferenceoptimizationlanguage}. Consequently, the influence of the reference model is minimal and can often be neglected. However, this behavior is contingent on the task being sufficiently challenging and the model being strong enough to avoid overoptimization. To validate this observation, we train Pythia-1B on TL;DR using SimPO across different margin values ($\gamma$) and learning rates, in a setting where the model is relatively weaker and the task is easier compared to standard chat datasets such as UltraFeedback~\citep{cui2024ultrafeedbackboostinglanguagemodels} used in SimPO.

We first consider a learning rate of $1 \times 10^{-6}$, which is known to be effective for DPO; DPO metrics are shown in Figure~\ref{fig:dpo}. In this setting, all SimPO models exhibit overoptimization regardless of the $\gamma$ hyperparameter; SimPO metrics are shown in Figure~\ref{fig:simpo1e-6}. Although reward definitions differ and direct comparison of losses or other training metrics is challenging, log-probabilities of samples remain comparable. We observe the characteristic extreme likelihood decreases, which correlate with overoptimization; this pattern is present in DAAs and, as we show, also occurs in online methods. Increasing the margin exacerbates this issue, as optimization aggressively seeks high separation, naturally resulting in overoptimization.

Reference-free methods like SimPO are particularly susceptible because they lack prior knowledge about sample difficulty, treating all samples equally. Some samples are inherently harder and should receive more attention, a behavior that could be partially captured by negative reference contributions in pairwise preferences. When a hardcoded margin pushes the model to satisfy strict separation objectives, it can amplify pathological behaviors during training.

However, when using a relatively low learning rate that allows for gradual updates, SimPO performs significantly better; metrics are shown in Figure~\ref{fig:simpo2e-7} and win rates in Figure~\ref{fig:simpowinrates}. In this regime, it emerges as a strong preference optimization method: an appropriate margin encourages the model to learn and optimize meaningful signals. Therefore, reference-free models require extra safeguards against overoptimization. Controlling the learning rate can act as an anchor, keeping updates within meaningful distributional shifts, although these models can still experience the overoptimization patterns observed in DAAs.

\section{Minimum Entropy RL: Additional Results}\label{app:minentropy_curves}

We provide additional details on the Minimum Entropy RL experiments described in Section~\ref{sec:minentropy}. The Minimum Entropy reward used flips the sign of the entropy term in Equation~\ref{eq:maxent_reward_normalized}, yielding
\begin{equation}\label{eq:minentropy_reward}
r(x, y) = r_\phi(x, y) + \frac{\beta}{|y|} \log \pi_\theta(y|x).
\end{equation}
We sweep $\beta \in \{0.01, 0.05, 0.1, 0.2, 0.3, 0.5\}$ at the same learning rates used for the Maximum Entropy experiments ($1 \times 10^{-6}$ for Pythia~1B, and both $1 \times 10^{-7}$ and $5 \times 10^{-8}$ for Pythia~2.8B). All other settings (optimizer, batch size, generation parameters, reward normalization) match Appendix~\ref{app:hyperparameters}.

\paragraph{Pythia-1B.} At learning rate $1 \times 10^{-6}$ (the rate at which Maximum Entropy variants overoptimize), Minimum Entropy RL trains stably for $\beta \in \{0.05, 0.1\}$, with $\beta = 0.05$ achieving win rates competitive with KL-constrained RLHF and KL divergence remaining bounded throughout training. Smaller coefficients ($\beta = 0.01$) leave optimization too unconstrained; larger coefficients ($\beta \geq 0.2$) are overly conservative and stall improvement.

\paragraph{Pythia-2.8B.} Across both learning rates ($1 \times 10^{-7}$ and $5 \times 10^{-8}$), no Minimum Entropy configuration we tested produced stable win-rate improvement over SFT. Two regimes emerge: small coefficients ($\beta \leq 0.05$) constrain the policy enough that proxy reward stops improving, with win rates remaining at or below SFT; larger coefficients ($\beta \geq 0.1$) eventually lead to overoptimization, resembling the failure mode of Maximum Entropy variants. We do not rule out that an alternative coefficient/learning-rate combination could recover stability at this scale, but the absence of any consistently stable configuration in our sweep is itself informative: it indicates that flipping the sign of the entropy term is not, on its own, sufficient to make reference-free regularization scale-robust.

\paragraph{Discussion.} While entropy minimization succeeds as a standalone reward for reasoning~\citep{agarwal2025unreasonableeffectivenessentropyminimization}, combining it with preference-based rewards creates optimization instabilities that scale poorly with model size. Reducing the learning rate further might recover some stability, but Minimum Entropy RL is not a one-to-one substitute for KL regularization, which remains more dynamic and adaptive across scales.

\newpage
\section{Mathematical Derivations for Maximum Entropy RL}\label{app:mathderivations}
\subsection{Deriving the Optimum of the Entropy-Regularized Reward Maximization Objective}
In this appendix, we will derive the optimal policy for Maximum Entropy RL. Analogously to the KL-constrained case~\citep{rafailov2024directpreferenceoptimizationlanguage}, we optimize the following objective:
\begin{align*}
\max_{\pi}  \mathbb{E}_{x\sim \mathcal{D}, y\sim \pi}\bigl[r(x, y)\bigr] + \alpha\mathcal{H}\bigl[\pi(y|x)\bigr]
\end{align*}
under any reward function $r(x,y)$ and a general non-parametric policy class, where $\mathcal{H}[\pi(y|x)] = -\mathbb{E}_{y\sim \pi(y|x)}[\log \pi(y|x)]$ is the entropy of the policy. We now have:
\label{eq:MaxEnt_RL_proof}
\begin{align*}
&\max_{\pi}  \mathbb{E}_{x\sim \mathcal{D}, y\sim \pi}\bigl[r(x, y)\bigr] + \alpha\mathcal{H}\bigl[\pi(y|x)\bigr] \nonumber\\
&=\max_{\pi}  \mathbb{E}_{x\sim \mathcal{D}}\mathbb{E}_{y\sim \pi(y|x)}\left[r(x, y) - \alpha\log\pi(y|x)\right] \nonumber\\
&=\min_{\pi}  \mathbb{E}_{x\sim \mathcal{D}}\mathbb{E}_{y\sim \pi(y|x)}\left[\log\pi(y|x) - \frac{1}{\alpha}r(x, y)\right] \nonumber\\ 
&=\min_{\pi}  \mathbb{E}_{x\sim \mathcal{D}}\mathbb{E}_{y\sim \pi(y|x)}\Bigg[\log\frac{\pi(y|x)}{\frac{1}{Z(x)}\exp\left(\frac{1}{\alpha}r(x, y)\right)} \nonumber\\
&\qquad\qquad\qquad\qquad - \log Z(x)\Bigg]
\end{align*}

where we have partition function:
\begin{align*}
Z(x) = \sum_{y}\exp\left(\frac{1}{\alpha}r(x, y)\right).
\end{align*}
Note that the partition function is a function of only $x$ and the reward function $r$, but does not depend on the policy $\pi$. We can now define
\begin{align*}
    \pi^*(y|x) = \frac{1}{Z(x)}\exp\left(\frac{1}{\alpha}r(x, y)\right),
\end{align*}

which is a valid probability distribution as $\pi^*(y|x)\geq 0$ for all $y$ and $\sum_{y}\pi^*(y|x)=1$. Since $Z(x)$ is not a function of $y$, we can then re-organize the final objective in Eq.~\ref{eq:MaxEnt_RL_proof} as:

\begin{align*}
&\min_{\pi} \mathbb{E}_{x\sim \mathcal{D}}\bigg[\mathbb{E}_{y\sim \pi(y|x)}\left[\log\frac{\pi(y|x)}{\pi^*(y|x)}\right] \nonumber \\
&\qquad\qquad\qquad - \log Z(x)\bigg]= \\
& \min_{\pi} \mathbb{E}_{x\sim\mathcal{D}} \left[D_{\text{KL}}(\pi(y|x)\|\pi^*(y|x)) - \log Z(x)\right]
\end{align*} 

Since $Z(x)$ is independent of $\pi$, the minimum is attained by the policy that minimizes the first KL term. By Gibbs' inequality, the KL divergence reaches its minimum value of zero if and only if the two distributions are identical. Therefore, this yields the optimal solution:
\label{app:maxent_derivation1}
\begin{equation}
    \pi(y|x)= \pi^*(y|x) = \frac{1}{Z(x)}\exp\left(\frac{1}{\alpha}r(x, y)\right)
\end{equation}
for all $x\in\mathcal{D}$. This completes the derivation.

\subsection{Deriving the SimPO Objective Under the Bradley-Terry Model}
\label{app:maxent_derivation2}
It is straightforward to derive the SimPO objective under the Bradley-Terry preference model as we have
\label{eq:BT_restated_maxent}
\begin{equation}
    p^*(y_1\succ y_2|x)=\frac{\exp\left(r^*(x, y_1)\right)}{\exp\left(r^*(x, y_1)\right) + \exp\left(r^*(x, y_2)\right)}
\end{equation}

We can express the (unavailable) ground-truth reward through its corresponding optimal policy:
\label{eq:main_eq_restated_maxent}
\begin{equation}
    r^*(x,y) =\alpha \log \pi^*(y|x) + \alpha \log Z(x)
\end{equation}
Substituting Eq.~\ref{eq:main_eq_restated_maxent} into Eq.~\ref{eq:BT_restated_maxent} we obtain:

\begin{align*}
p^*(y_1 &\succ y_2|x) \\
&= \frac{\exp(\alpha \log \pi^*(y_1|x))}{\sum_{i \in \{1,2\}} \exp(\alpha \log \pi^*(y_i|x))} \\
&= \frac{1}{1+\exp\left(\alpha \log \frac{\pi^*(y_2|x)}{\pi^*(y_1|x)}\right)} \\
&= \sigma\left(\alpha \log \pi^*(y_1|x) - \alpha \log \pi^*(y_2|x)\right)
\end{align*}

The last line is the per-instance loss for SimPO, without target margin $\gamma$ and length normalization.

\subsection{Deriving the SimPO Objective Under the Plackett-Luce Model}
\label{app:maxent_plackett_luce_models}
The Plackett-Luce model \citep{af5079a1-8ca5-3727-a405-0a82390327b7} extends the Bradley-Terry model from pairwise comparisons to full rankings. As in the Bradley-Terry framework, the probability of selecting an option is assumed to be proportional to the value of an underlying latent reward function. In our setting, given a prompt $x$ and a collection of $K$ candidate answers $y_1, \ldots, y_K$, the user produces a permutation $\tau:[K]\to[K]$ that represents their ranking of the answers. Under the Plackett-Luce model, the probability of such a ranking is defined as follows. Let $r_k = r^*(x, y_{\tau(k)})$. Then:
\begin{equation}\label{eq:pl-model-maxent}
    p^*(\tau| y_1,\ldots, y_K, x)= \prod_{k=1}^{K}\frac{\exp(r_k)}{\sum_{j=k}^{K}\exp(r_j)}
\end{equation}
Observe that when $K=2$, Equation~\ref{eq:pl-model-maxent} simplifies to the Bradley-Terry model. For the general Plackett-Luce model, however, we can still leverage the reward parameterization by substituting the reward function expressed in terms of its optimal policy. As in Appendix~\ref{app:maxent_derivation2}, the normalization constant $Z(x)$ cancels out. Let $s_k = \alpha \log \pi^*(y_{\tau(k)}|x)$. Then:
\begin{equation}
    p^*(\tau| y_1,\ldots, y_K, x)= \prod_{k=1}^{K}\frac{\exp(s_k)}{\sum_{j=k}^{K}\exp(s_j)}
\end{equation}

Similarly to the approach for standard DPO, if we have access to a dataset $\mathcal{D} = \{\tau^{(i)}, y_1^{(i)}, \ldots, y_K^{(i)}, x^{(i)}\}_{i=1}^N$ of prompts and user-specified rankings, we can use a parameterized model and optimize this objective with maximum-likelihood. Let $s_k^\theta = \alpha \log \pi_\theta(y_{\tau(k)}|x)$. Then:
\begin{align*}
    \mathcal{L}_{\text{SimPO}}(\pi_{\theta}) \\ = -\mathbb{E}_{\mathcal{D}}\left[\sum_{k=1}^{K}\log\frac{\exp(s_k^\theta)}{\sum_{j=k}^{K}\exp(s_j^\theta)}\right]
\end{align*}

\subsection{Deriving the Gradient of the SimPO Objective}
\label{app:maxent_gradient_derivation}
In this section we derive the gradient of the SimPO objective:
\label{eq:maxent-grad-start}
\begin{align*}
    \nabla_{\theta}\mathcal{L}_\text{SimPO}(\pi_{\theta}) \\
    = -\nabla_{\theta}\mathbb{E}_{(x, y_w, y_l)\sim \mathcal{D}} \\\left[\log \sigma \left(\alpha \log \pi_{\theta}(y_w|x) - \alpha \log \pi_{\theta}(y_l|x)\right)\right]
\end{align*}

We can rewrite the RHS of Equation~\ref{eq:maxent-grad-start} as 
\begin{align*}
    \nabla_{\theta}\mathcal{L}_\text{SimPO}(\pi_{\theta})
    =-\mathbb{E}_{(x, y_w, y_l)\sim \mathcal{D}}\left[\frac{\sigma'\left(u\right)}{\sigma \left(u\right)}\nabla_{\theta}\left(u\right)\right],
\end{align*}
where $u = \alpha \log \pi_{\theta}(y_w|x) - \alpha \log \pi_{\theta}(y_l|x)$.

Using the properties of sigmoid function $\sigma'(x) = \sigma(x)(1-\sigma(x))$ and $\sigma(-x) = 1-\sigma(x)$, we obtain the final gradient

\begin{multline*}
\nabla_{\theta}\mathcal{L}_\text{SimPO}(\pi_{\theta}) = 
     -\mathbb{E}_{(x, y_w, y_l) \sim \mathcal{D}} \\ \bigg[\alpha\sigma \left(\alpha \log \pi_{\theta}(y_l|x) - \alpha \log \pi_{\theta}(y_w|x)\right) \\ \bigg[\nabla_\theta\log \pi(y_w \mid x) - \nabla_\theta\log\pi(y_l \mid x)\bigg]\bigg],
\end{multline*}

After using the reward substitution of $\hat{r}_\theta(x, y) = \alpha \log \pi_\theta(y \mid x)$ we obtain the final form of the gradient.

\subsection{Proof of Lemma 1 and 2 from DPO for Maximum Entropy RL}
\label{app:maxent_lemma1}

In this section, we will prove the two lemmas from DPO for Maximum Entropy RL.

\begin{lemma}[Lemma 1]
Under the Plackett-Luce preference framework, and in particular the Bradley-Terry framework, two reward functions from the same equivalence class induce the same preference distribution.
\end{lemma}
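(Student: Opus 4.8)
The plan is to argue directly from the Plackett-Luce probability in Equation~\ref{eq:pl-model-maxent} and show that an additive, response-independent shift of the reward leaves every factor of the product unchanged. Recall that, following the DPO framework, two reward functions $r$ and $r'$ lie in the same equivalence class precisely when they differ by a function of the prompt alone, i.e. $r'(x,y) = r(x,y) + f(x)$ for some $f$. The entire argument rests on the observation that such a shift contributes the \emph{same} multiplicative factor $\exp(f(x))$ to every term indexed by a candidate response, so it must cancel between numerator and denominator in each stage of the ranking.

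Concretely, I would substitute $r'(x, y_{\tau(k)}) = r(x, y_{\tau(k)}) + f(x)$ into the Plackett-Luce expression. For each $k$, the $k$-th factor becomes
\begin{equation*}
\frac{\exp\!\left(r(x, y_{\tau(k)}) + f(x)\right)}{\sum_{j=k}^{K}\exp\!\left(r(x, y_{\tau(j)}) + f(x)\right)} = \frac{\exp(f(x))\,\exp\!\left(r(x, y_{\tau(k)})\right)}{\exp(f(x))\sum_{j=k}^{K}\exp\!\left(r(x, y_{\tau(j)})\right)} = \frac{\exp\!\left(r(x, y_{\tau(k)})\right)}{\sum_{j=k}^{K}\exp\!\left(r(x, y_{\tau(j)})\right)},
\end{equation*}
where the crucial point is that $f(x)$ does not depend on the summation index $j$, so $\exp(f(x))$ factors out of the denominator sum and cancels the $\exp(f(x))$ in the numerator. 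Taking the product over $k = 1, \ldots, K$ then yields $p_{r'}(\tau \mid y_1,\ldots,y_K,x) = p_{r}(\tau \mid y_1,\ldots,y_K,x)$, which establishes that the two rewards induce identical ranking distributions.

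Finally, I would note that the Bradley-Terry model is the $K=2$ special case of Plackett-Luce, as already observed immediately after Equation~\ref{eq:pl-model-maxent}, so the pairwise claim follows as a corollary with no extra work. There is no genuine obstacle here: the content of the lemma is exactly the shift-invariance of the softmax, and the only point requiring care is to verify that the shift is response-independent (a function of $x$ only), since this is precisely what guarantees it factors cleanly out of each normalization sum. This shift-invariance is what makes the reward identifiable only up to its equivalence class from preference data alone, which is the reason the lemma matters for the downstream consistency and equivalence-class-preservation claims.
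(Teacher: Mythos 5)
Your proof is correct and follows essentially the same route as the paper's own argument: substitute $r'(x,y) = r(x,y) + f(x)$ into each Plackett-Luce factor, observe that $\exp(f(x))$ is independent of the summation index and hence cancels between numerator and denominator, and recover the Bradley-Terry case as the $K=2$ specialization. Your added remarks on shift-invariance of the softmax and reward identifiability are accurate framing but do not change the substance, which matches the paper exactly.
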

\begin{proof}
We say that two reward functions $r(x, y)$ and $r'(x, y)$ are from the same equivalence class if $r'(x, y) = r(x, y) + f(x)$ for some function $f$. We consider the general Plackett-Luce (with the Bradley-Terry model a special case for $K=2$) and denote the probability distribution over rankings induced by a particular reward function $r(x, y)$ as $p_r$. For any prompt $x$, answers $y_1,\ldots, y_K$ and ranking $\tau$, let $r_k = r(x, y_{\tau(k)})$ and $r'_k = r'(x, y_{\tau(k)}) = r_k + f(x)$. Then:
\begin{align*}
    p_{r'}&(\tau| y_1,\ldots, y_K, x) \\
    &= \prod_{k=1}^{K}\frac{\exp(r'_k)}{\sum_{j=k}^{K}\exp(r'_j)} \\
    &= \prod_{k=1}^{K}\frac{\exp(r_k + f(x))}{\sum_{j=k}^{K}\exp(r_j + f(x))} \\
    &= \prod_{k=1}^{K}\frac{\exp(f(x))\exp(r_k)}{\exp(f(x))\sum_{j=k}^{K}\exp(r_j)} \\
    &= \prod_{k=1}^{K}\frac{\exp(r_k)}{\sum_{j=k}^{K}\exp(r_j)} \\
    &= p_{r}(\tau| y_1,\ldots, y_K, x)
\end{align*}
which completes the proof.
\end{proof}

\begin{lemma}[Lemma 2]
Two reward functions from the same equivalence class induce the same optimal policy under the entropy-regularized RL problem.
\end{lemma}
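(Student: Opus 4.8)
The plan is to invoke the closed-form optimal policy established in Appendix~\ref{app:maxent_derivation1} and show that a prompt-dependent shift $f(x)$ is absorbed by the partition function, exactly mirroring how the $\log Z(x)$ terms canceled in the Bradley-Terry substitution of Appendix~\ref{app:maxent_derivation2}. Since the entropy-regularized objective has a \emph{unique} maximizer (the KL divergence in the reformulation vanishes only at $\pi^*$), it suffices to verify that the two closed-form policies coincide pointwise.

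First I would recall that for any reward function $r$, the optimal policy is
\begin{equation*}
\pi_r(y|x) = \frac{1}{Z_r(x)}\exp\left(\frac{1}{\alpha}r(x,y)\right), \qquad Z_r(x) = \sum_{y}\exp\left(\frac{1}{\alpha}r(x,y)\right).
\end{equation*}
Taking $r'(x,y) = r(x,y) + f(x)$ for an arbitrary function $f$, I would substitute into this formula. The key observation is that $f(x)$ is constant with respect to the summation variable $y$, so $\exp\!\big(\tfrac{1}{\alpha}f(x)\big)$ factors out of the partition function, giving $Z_{r'}(x) = \exp\!\big(\tfrac{1}{\alpha}f(x)\big)\,Z_r(x)$.

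Next I would write out $\pi_{r'}(y|x)$ explicitly and cancel the common factor $\exp\!\big(\tfrac{1}{\alpha}f(x)\big)$ appearing in both the numerator and the partition function:
\begin{equation*}
\pi_{r'}(y|x) = \frac{\exp\!\big(\tfrac{1}{\alpha}(r(x,y)+f(x))\big)}{\exp\!\big(\tfrac{1}{\alpha}f(x)\big)\,Z_r(x)} = \frac{\exp\!\big(\tfrac{1}{\alpha}r(x,y)\big)}{Z_r(x)} = \pi_r(y|x),
\end{equation*}
which holds for every $x \in \mathcal{D}$ and every $y$, completing the proof.

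There is no real obstacle here: the argument is a one-line cancellation, and the only point requiring care is the (elementary) remark that $f(x)$ does not depend on $y$ and therefore pulls out of the sum defining $Z_r(x)$. The conceptual content is simply that the maximum-entropy solution depends on the reward only through its relative values across responses $y$ for a fixed prompt $x$, so any additive reparameterization $r \mapsto r + f(x)$ is invisible to the induced policy — the same invariance that makes the SimPO objective reference-free.
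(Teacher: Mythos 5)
Your proof is correct and matches the paper's argument exactly: both invoke the closed-form solution $\pi_r(y|x) = \frac{1}{Z_r(x)}\exp\left(\frac{1}{\alpha}r(x,y)\right)$ and observe that the prompt-dependent shift $f(x)$ factors out of the partition function and cancels against the numerator. Your added remark that the maximizer is unique (via the vanishing-KL argument) makes explicit a point the paper leaves implicit, but the route is essentially identical.
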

\begin{proof}
Let us consider two reward functions from the same class, such that $r'(x, y)=r(x, y)+f(x)$ and, let us denote as $\pi_r$ and $\pi_{r'}$ the corresponding optimal policies. For all $x, y$, let $\rho = \frac{1}{\alpha}r(x,y)$, $\rho' = \frac{1}{\alpha}r'(x,y)$, and $\phi = \frac{1}{\alpha}f(x)$. Then:
\begin{align*}
    \pi_{r'}(y|x) &= \frac{\exp(\rho')}{\sum_{y}\exp(\rho')} \\
    &= \frac{\exp(\rho + \phi)}{\sum_{y}\exp(\rho + \phi)} \\
    &= \frac{\exp(\phi)\exp(\rho)}{\exp(\phi)\sum_{y}\exp(\rho)} \\
    &= \frac{\exp(\rho)}{\sum_{y}\exp(\rho)} \\
    &= \pi_r(y|x),
\end{align*}
\end{proof}

\subsection{Proof of Theorem 1 from DPO for Maximum Entropy RL}
\label{app:maxent_thm1}

In this section, we will elaborate on the results of the main theorem from DPO for Maximum Entropy RL.

\begin{theorem}[Maximum Entropy Version]
Assume we have a parameter $\alpha>0$. All reward equivalence classes, as defined in the previous section, can be represented with the reparameterization $r(x, y) = \alpha \log \pi(y|x)$ for some model $\pi(y|x)$.
\end{theorem}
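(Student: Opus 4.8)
The plan is to mirror the structure of DPO's Theorem~1 argument, using the closed-form optimal policy derived in Appendix~\ref{app:maxent_derivation1} to exhibit, inside every reward equivalence class, a canonical representative of the prescribed form. Recall from the definition used in the lemmas that two reward functions lie in the same class exactly when they differ by a shift $f(x)$ depending only on the prompt. It therefore suffices to show that each class contains \emph{some} reward expressible as $\alpha\log\pi(y\mid x)$ for a valid policy $\pi$; I do not need to track the whole class, only produce one member of the desired shape.

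First I would take an arbitrary reward function $r(x,y)$ as a stand-in for its equivalence class and form its entropy-regularized optimal policy. By the derivation in Appendix~\ref{app:maxent_derivation1} this is
\begin{equation*}
\pi_r(y\mid x) = \frac{1}{Z(x)}\exp\!\left(\tfrac{1}{\alpha}r(x,y)\right), \qquad Z(x)=\sum_{y}\exp\!\left(\tfrac{1}{\alpha}r(x,y)\right).
\end{equation*}
Taking logarithms and solving for the reward would give $r(x,y) = \alpha\log\pi_r(y\mid x) + \alpha\log Z(x)$, which isolates the reward as the target log-policy term plus a residual $\alpha\log Z(x)$.

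The key observation I would rely on is that $\alpha\log Z(x)$ is a function of $x$ alone, since $Z(x)$ sums over all $y$ and hence carries no $y$-dependence. Choosing the shift $f(x) = -\alpha\log Z(x)$ therefore keeps us within the same equivalence class, and the projected reward $r'(x,y) = r(x,y) + f(x) = \alpha\log\pi_r(y\mid x)$ has exactly the prescribed reparameterization. It would then remain only to check that $\pi_r$ is a legitimate model: by construction $\pi_r(y\mid x)>0$ for every $y$, so the logarithm is well defined, and $\sum_y \pi_r(y\mid x)=1$ by the definition of $Z(x)$, so $\pi_r$ is a valid policy. This establishes that an arbitrary class is represented by $\alpha\log\pi(y\mid x)$ with $\pi=\pi_r$.

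I do not expect a genuine obstacle here: the whole argument hinges on the single structural fact that the partition function $Z(x)$ depends only on $x$, which is precisely what lets the normalizing constant be absorbed into the equivalence relation. The one point requiring care is the well-definedness and validity of $\pi_r$ (positivity and normalization), but both follow immediately from the form of $Z(x)$. If one additionally wanted uniqueness of the representative within each class, as DPO notes, I would point out that the normalization constraint $\sum_y \pi(y\mid x)=1$ pins down the residual $x$-dependent freedom, so the canonical representative is in fact unique; but for the stated theorem, existence is all that is needed.
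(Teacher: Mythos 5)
Your proposal is correct and follows essentially the same route as the paper's own proof: take an arbitrary reward $r$ in the class, invert its entropy-regularized optimal policy to write $r(x,y) = \alpha\log\pi_r(y\mid x) + \alpha\log Z(x)$, and subtract the $x$-only term $\alpha\log Z(x)$ to obtain the canonical representative within the same equivalence class. Your added checks (positivity and normalization of $\pi_r$) and the uniqueness remark match what the paper handles in its subsequent Proposition, so there is nothing substantive to flag.
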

\begin{proof}
Consider any reward function $r(x,y)$, which induces an optimal model $\pi_r(y|x)$ under the entropy-regularized RL problem, with solution given by the optimal policy derivation. We have:
\begin{equation*}
    r(x,y) =\alpha \log \pi_r(y|x) + \alpha \log Z(x)
\end{equation*}
where $Z(x) =\sum_{y}\exp\left(\frac{1}{\alpha}r(x, y)\right)$ (notice that $Z(x)$ also depends on the reward function $r$). Using the operator $r'(x, y) = f(r, \alpha)(x, y) = r(x, y) - \alpha \log Z(x)$, we see that this new reward function is within the equivalence class of $r$ and, we have:
\begin{equation*}
    r'(x,y) =\alpha \log \pi_r(y|x)
\end{equation*}

which completes the proof.
\end{proof}
We can further expand on these results. We can see that if $r$ and $r'$ are two reward functions in the same class, then
\begin{align*}
    f(r, \alpha)(x, y)= \alpha \log \pi_r(y|x)= \\
\alpha \log \pi_{r'}(y|x) = f(r', \alpha)(x, y)
\end{align*}
where the second equality follows from Lemma 2. We have proven that the operator $f$ maps all reward functions from a particular equivalence class to the same reward function. Next, we show that for every equivalence class of reward functions, the reward function that has the reparameterization outlined in the main theorem is unique.

\begin{proposition}\label{prop:maxent_unique}
Assume we have a parameter $\alpha>0$. Then every equivalence class of reward functions has a unique reward function $r(x, y)$, which can be reparameterized as $r(x, y) = \alpha \log \pi(y|x)$ for some model $\pi(y|x)$.
\end{proposition}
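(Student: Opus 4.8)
The plan is to argue by contradiction, mirroring the uniqueness argument for DPO's reward reparameterization. Suppose that within a single equivalence class there exist two reward functions $r$ and $r'$ that both admit the reparameterization guaranteed by the theorem, so that $r(x,y) = \alpha \log \pi(y|x)$ and $r'(x,y) = \alpha \log \pi'(y|x)$ for valid, normalized policies $\pi$ and $\pi'$. Because $r$ and $r'$ lie in the same equivalence class, by definition there is a function $f$ with $r'(x,y) = r(x,y) + f(x)$ for all $x,y$.

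First I would combine these three relations. Substituting the two reparameterizations into the equivalence relation gives $\alpha \log \pi'(y|x) = \alpha \log \pi(y|x) + f(x)$, and exponentiating (using $\alpha>0$) yields the pointwise identity $\pi'(y|x) = \pi(y|x)\,\exp\!\big(\tfrac{1}{\alpha} f(x)\big)$ for every $x,y$.

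Next I would invoke the normalization constraint that both $\pi$ and $\pi'$ satisfy. Summing the identity over all responses $y$ and using $\sum_y \pi'(y|x) = \sum_y \pi(y|x) = 1$, the factor $\exp\!\big(\tfrac{1}{\alpha} f(x)\big)$ pulls out of the sum, since it does not depend on $y$, so that $1 = \exp\!\big(\tfrac{1}{\alpha} f(x)\big)$. As $\alpha>0$, this forces $f(x)=0$ for all $x$, whence $r'(x,y)=r(x,y)$ identically. Thus the two supposedly distinct representatives coincide, which establishes uniqueness and, together with the theorem's existence statement, upgrades ``every class can be represented'' to ``every class has exactly one such representative.''

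The argument is short, so the single point deserving care — the step I would flag as the crux — is the use of normalization: the reparameterization $r = \alpha \log \pi$ is only meaningful when $\pi$ is a genuine probability distribution over $y$, and it is precisely the constraint $\sum_y \pi(y|x)=1$, rather than any property of the reward itself, that eliminates the per-prompt shift $f(x)$. The partition function $Z(x)$ plays no direct role here beyond ensuring that the policy produced by the theorem is normalized in the first place.
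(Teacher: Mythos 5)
Your proposal is correct and follows essentially the same route as the paper's own proof: assume two representatives $r' = r + f(x)$ in one class both admit the form $\alpha\log\pi$, exponentiate to get $\pi'(y|x) = \pi(y|x)\exp\bigl(\tfrac{1}{\alpha}f(x)\bigr)$, and sum over $y$ so that normalization forces $\exp\bigl(\tfrac{1}{\alpha}f(x)\bigr)=1$, hence $f\equiv 0$ and $r=r'$. Your closing remark correctly identifies the crux --- it is the constraint $\sum_y \pi(y|x)=1$, not any property of the reward, that kills the per-prompt shift --- which is exactly the step the paper's argument hinges on as well.
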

\begin{proof}
    We will proceed using proof by contradiction. Assume we have two reward functions from the same class, such that $r'(x, y) = r(x, y) + f(x)$. Moreover, assume that  $r'(x, y) = \alpha \log \pi'(y|x)$ for some model $\pi'(y|x)$ and  $r(x, y) = \alpha \log \pi(y|x)$ for some model $\pi(y|x)$, such that $\pi\neq\pi'$. We then have
\begin{align*}
r'(x, y) = r(x, y) + f(x)  = \alpha \log \pi(y|x) + f(x) \\ =\alpha \log \pi(y|x)\exp(\tfrac{1}{\alpha} f(x))  \\ = \alpha \log \pi'(y|x)
\end{align*}

    for all prompts $x$ and completions $y$. Then we must have $\pi(y|x)\exp(\tfrac{1}{\alpha} f(x)) = \pi'(y|x)$. Since these are distributions, summing over $y$ on both sides, we obtain that $\exp(\tfrac{1}{\alpha} f(x)) = 1$ and since $\alpha>0$, we must have $f(x)=0$ for all $x$. Therefore $r(x,y) = r'(x,y)$. This completes the proof.
\end{proof}

We have now shown that every reward class has a unique reward function that can be represented as outlined in the main theorem, which is given by $f(r, \alpha)$ for any reward function in that class.

\section{Extra Figures}
\begin{figure*}[h]
    \centering
    \includegraphics[width=0.9\textwidth]{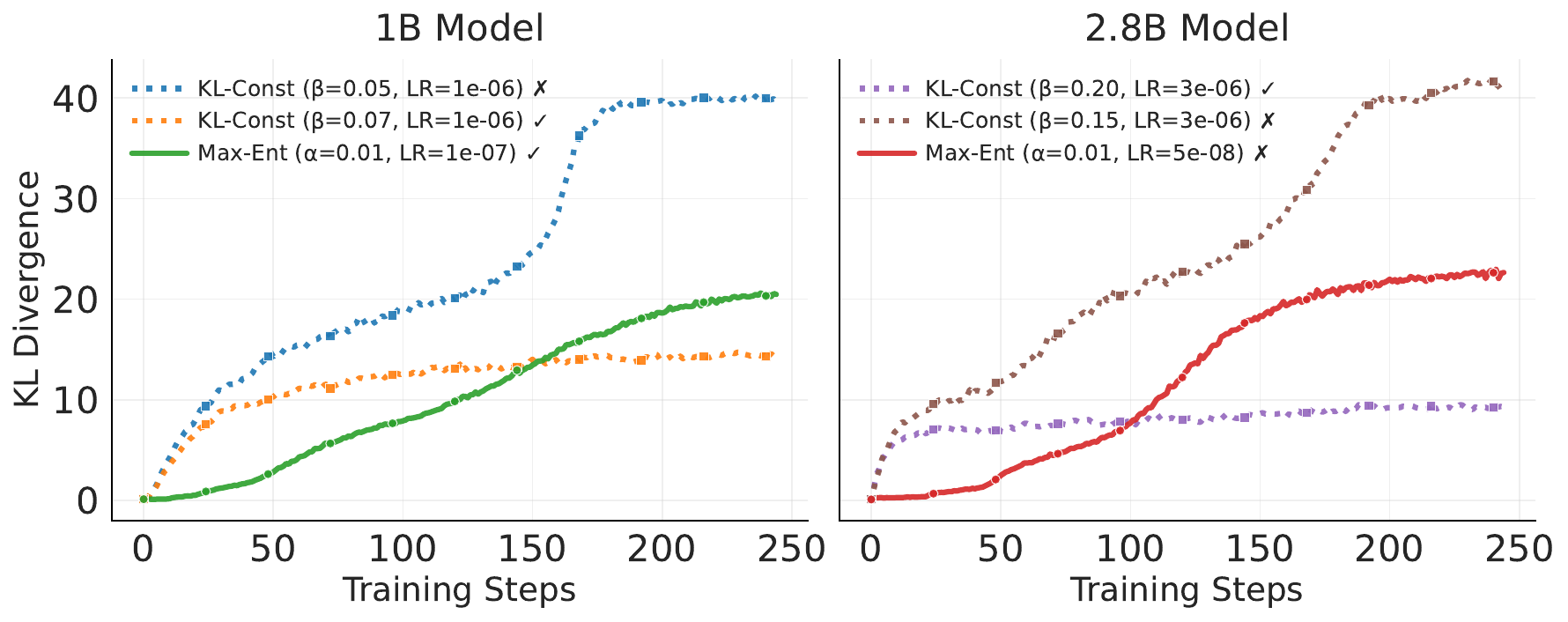}
    \caption{KL divergence evolution during training for 1B and 2.8B parameter models using different regularization methods. The left panel shows results for the 1B model and the right panel shows results for the 2.8B model. Each panel compares KL-Constrained and Maximum-Entropy approaches. Checkmarks ($\checkmark$) indicate high win rate runs and crosses ($\times$) indicate overoptimized runs.}

    \label{fig:KLdiv}
\end{figure*}

\begin{figure*}[h]
    \centering
    \includegraphics[width=0.6\textwidth]{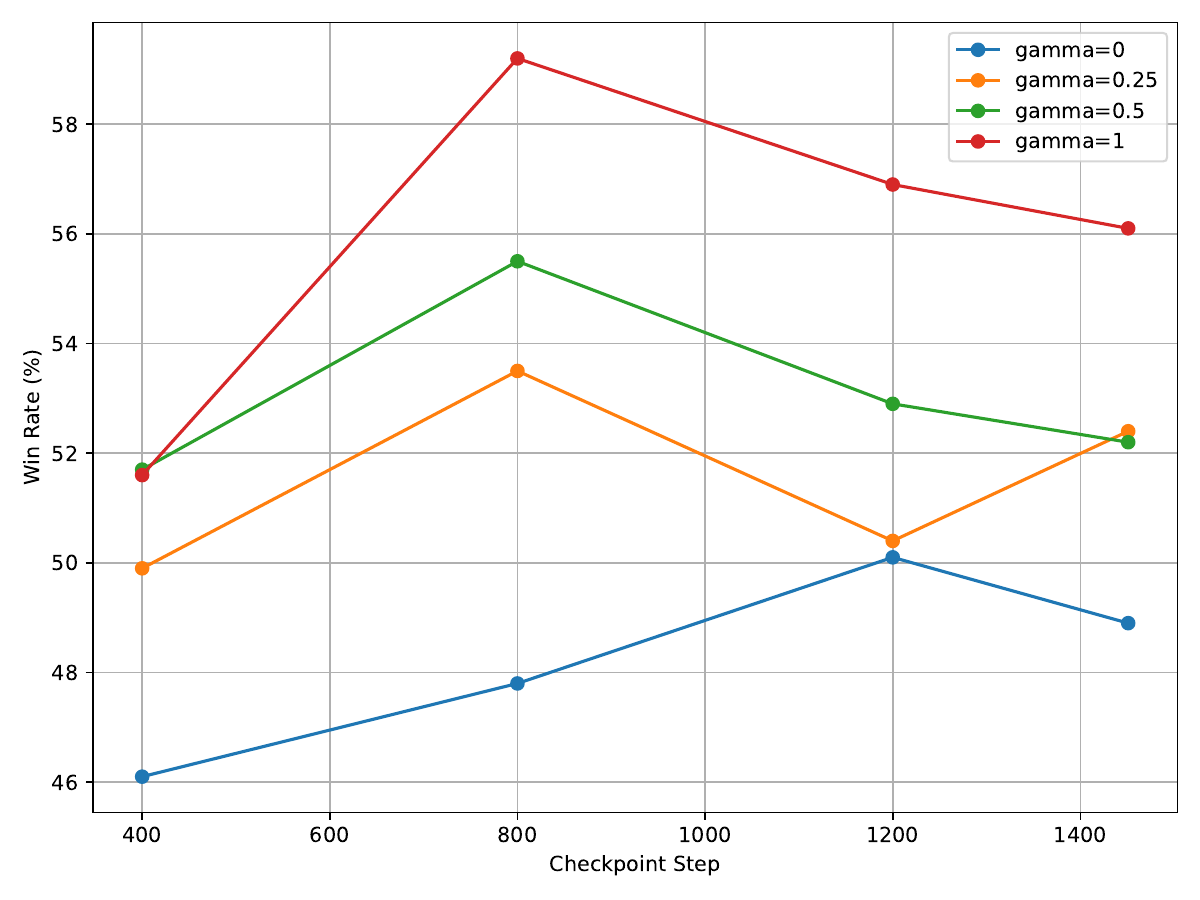}
    \caption{Win rate progression across training checkpoints for different values of the gamma hyperparameter. Results are for the Pythia-1B model trained with a learning rate of $2\times10^{-7}$.}

    \label{fig:simpowinrates}
\end{figure*}

\begin{figure*}[h]
    \centering
    \includegraphics[width=1\textwidth]{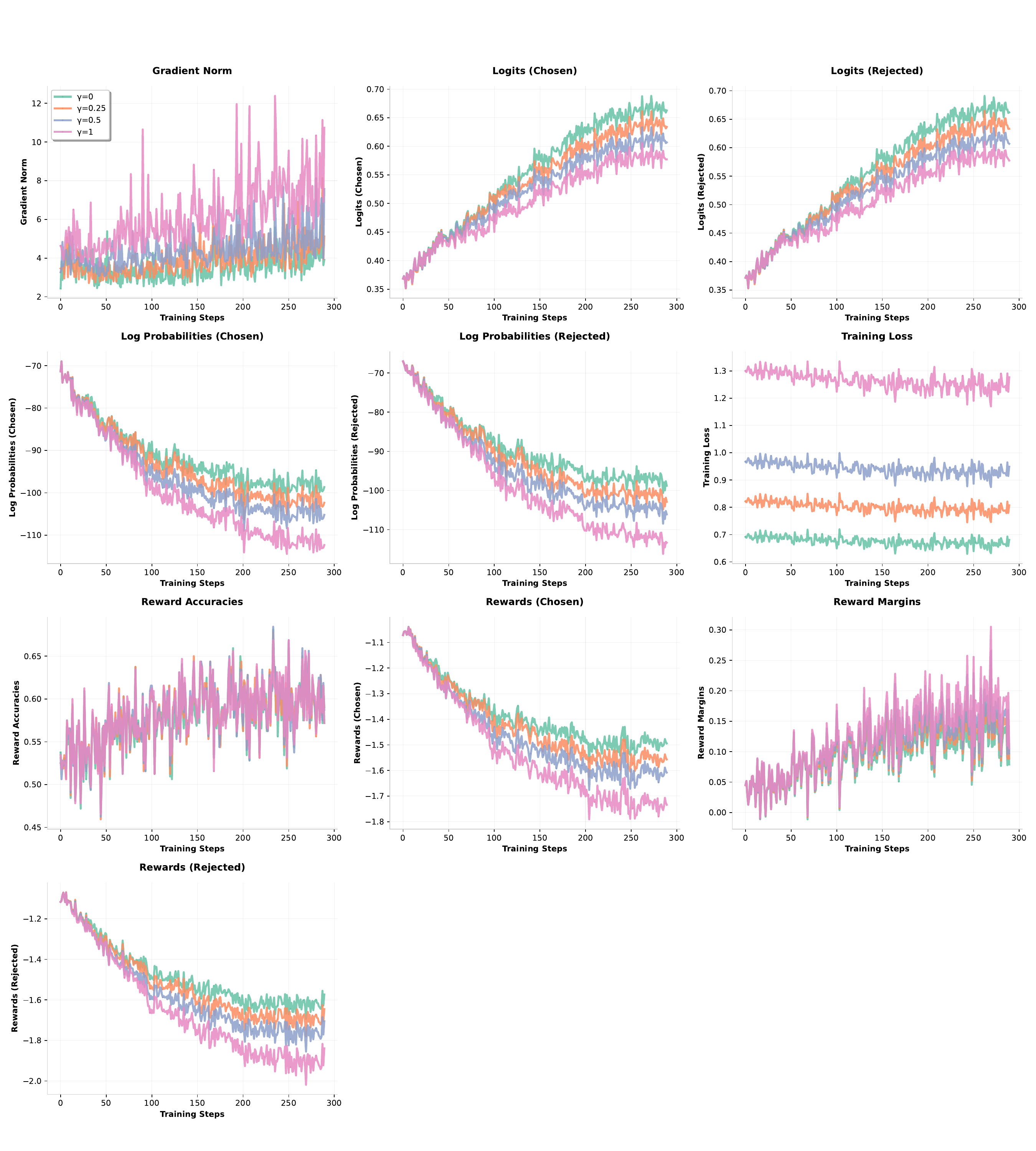}
    \caption{SimPO training metrics across different gamma values. Comparison of key training dynamics including loss, gradients, logits, and reward metrics for $\gamma \in \{0,0.25,0.5,1.0\}$ using Pythia-1B with learning rate $2 \times 10^{-7}$.}

    \label{fig:simpo2e-7}
\end{figure*}

\begin{figure*}[h]
    \centering
    \includegraphics[width=1\textwidth]{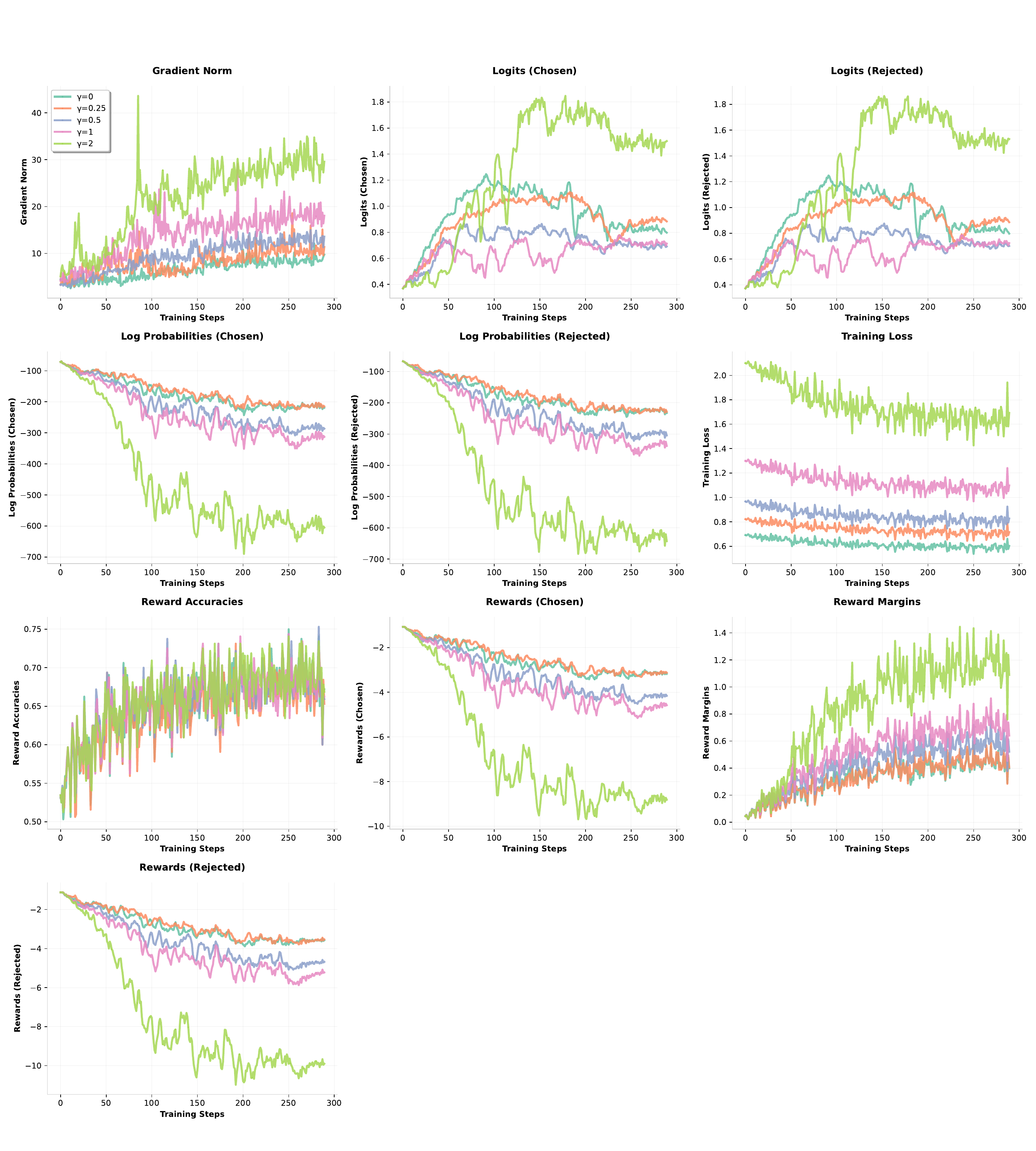}
    \caption{SimPO training metrics across different gamma values. Comparison of key training dynamics including loss, gradients, logits, and reward metrics for $\gamma \in \{0,0.25,0.5,1.0,2.0\}$ using Pythia-1B with learning rate $1 \times 10^{-6}$.}

    \label{fig:simpo1e-6}
\end{figure*}

\begin{figure*}[h]
    \centering
    \includegraphics[width=1\textwidth]{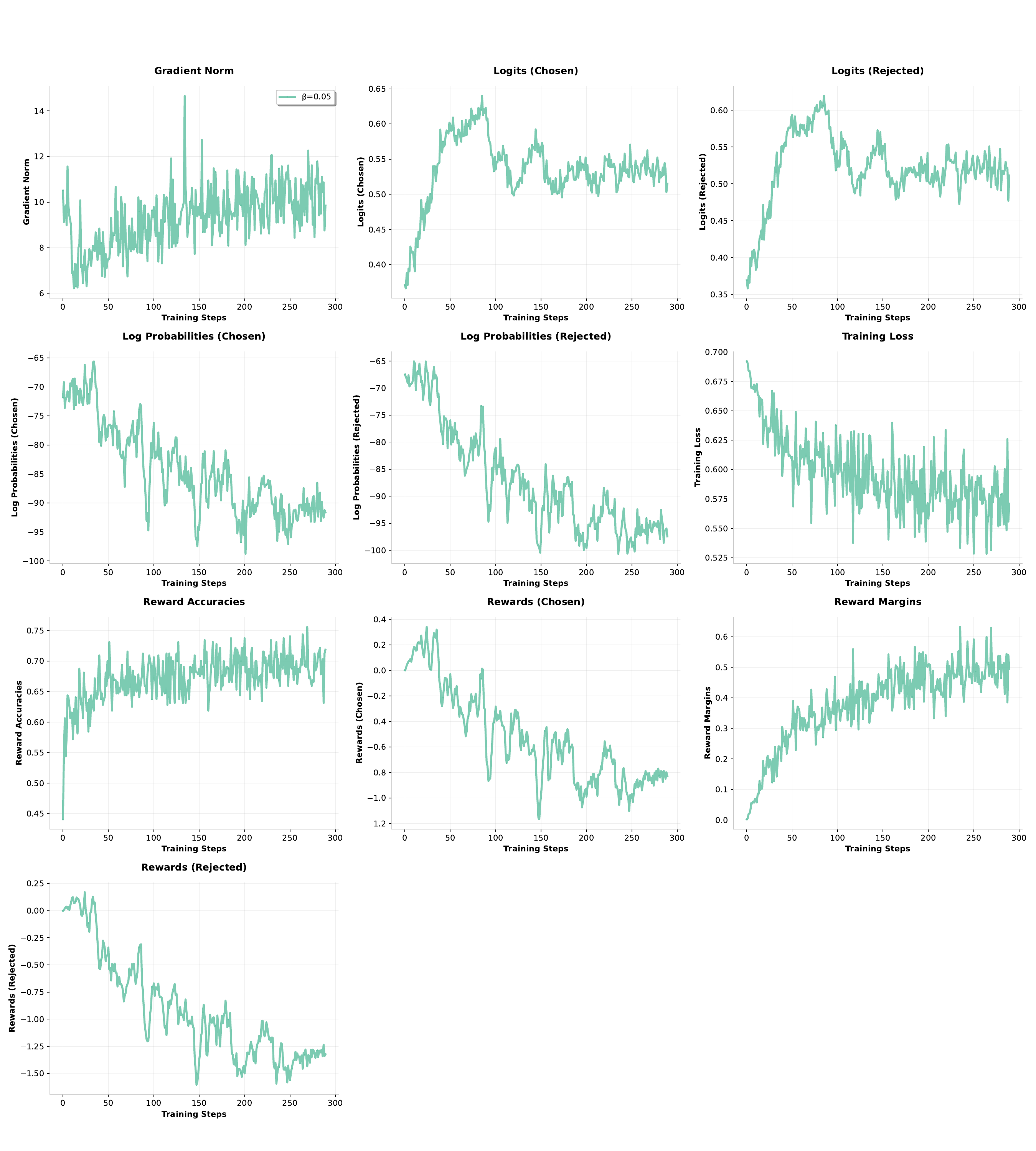}
    \caption{DPO training metrics with $\beta=0.05$. Comparison of key training dynamics including loss, gradients, logits, and reward metrics, using Pythia-1B with learning rate $1 \times 10^{-6}$.}

    \label{fig:dpo}
\end{figure*}

\begin{figure*}[h]
    \centering
    \includegraphics[width=0.7\textwidth]{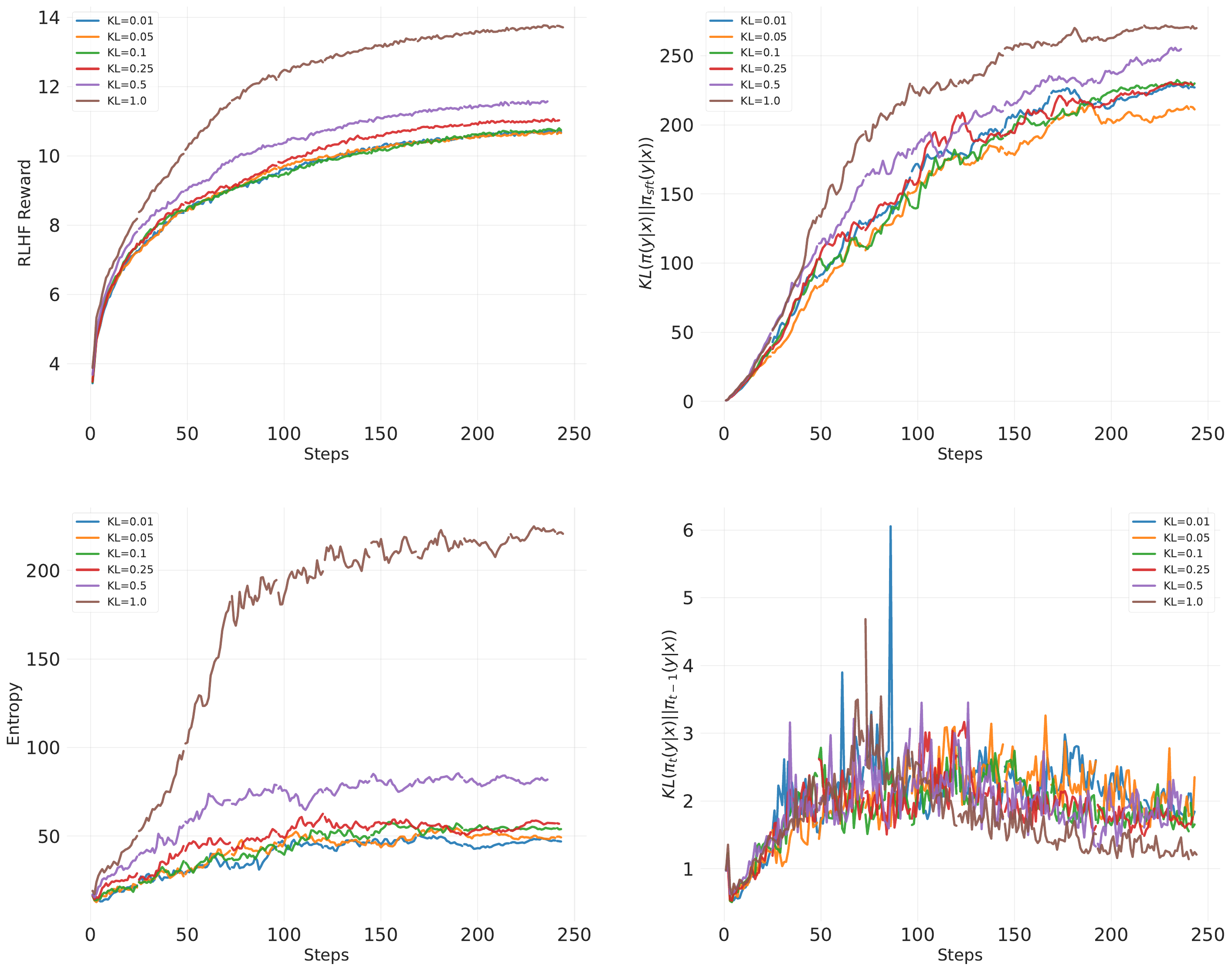}
    \caption{Reward dynamics and KL divergence metrics for entropy-regularized RL training across different entropy coefficients. Top-left panel shows reward progression (RLHF reward) over training steps for various entropy values. Top-right panel shows KL divergence between the current policy and the SFT reference policy ($KL(\pi_t || \pi_\text{SFT})$). Bottom-left panel tracks entropy reward across training steps. Bottom-right panel displays KL divergence between consecutive policy updates ($KL(\pi_t || \pi_{t-1})$). All plots are based on the Pythia-6.9B model trained with the learning rate of $1 \times 10^{-6}$. }

    \label{fig:pythia6.9B-1e-6-resuts}
\end{figure*}

\begin{figure*}[h]
    \centering
    \includegraphics[width=0.7\textwidth]{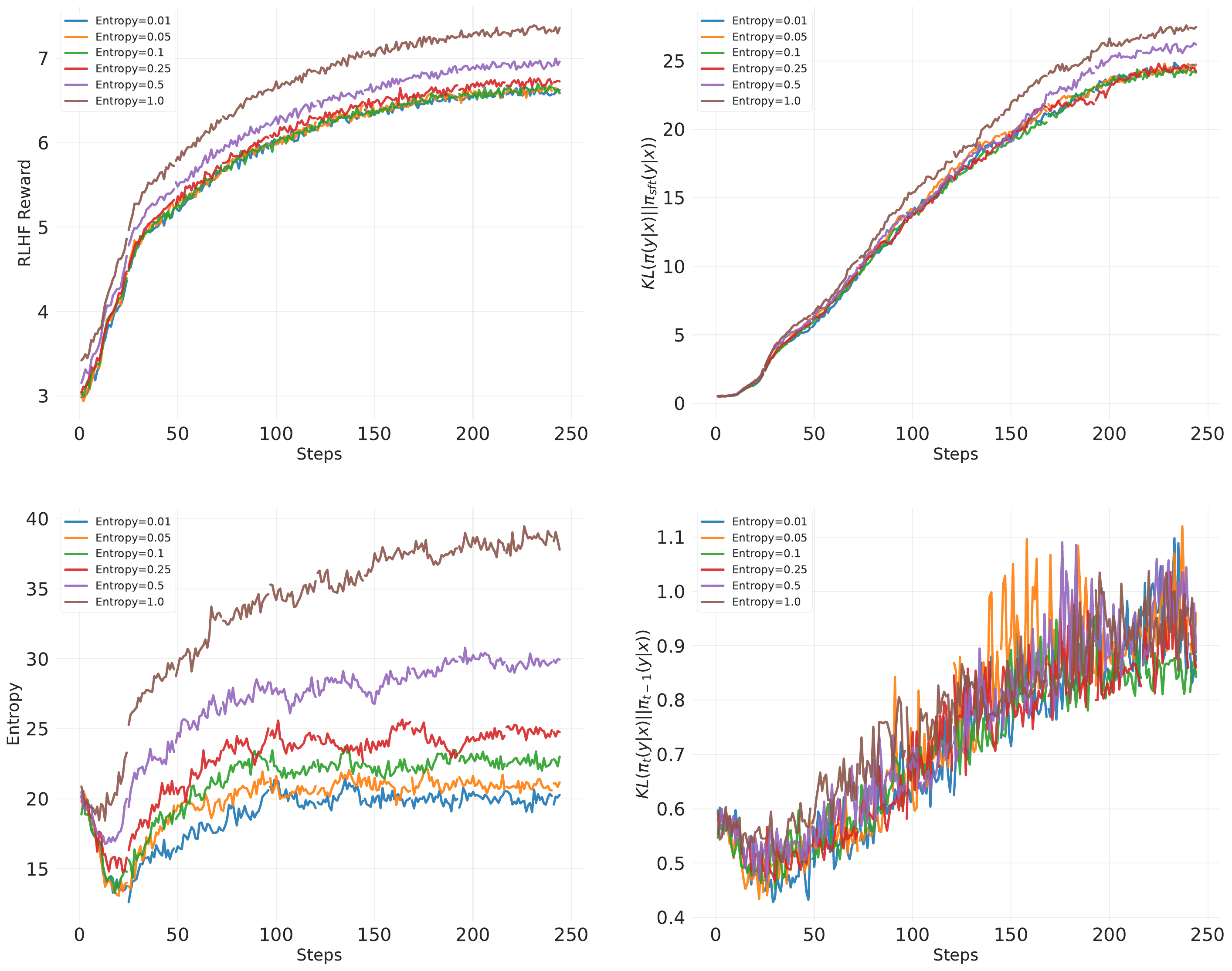}
    \caption{Reward dynamics and KL divergence metrics for entropy-regularized RL training across different entropy coefficients. Top-left panel shows reward progression (RLHF reward) over training steps for various entropy values. Top-right panel shows KL divergence between the current policy and the SFT reference policy ($KL(\pi_t || \pi_\text{SFT})$). Bottom-left panel tracks entropy reward across training steps. Bottom-right panel displays KL divergence between consecutive policy updates ($KL(\pi_t || \pi_{t-1})$). All plots are based on the Pythia-6.9B model trained with the learning rate of $1 \times 10^{-7}$. }

    \label{fig:pythia6.9b-1e-7}
\end{figure*}

\begin{figure*}[h]
    \centering
    \includegraphics[width=0.8\textwidth]{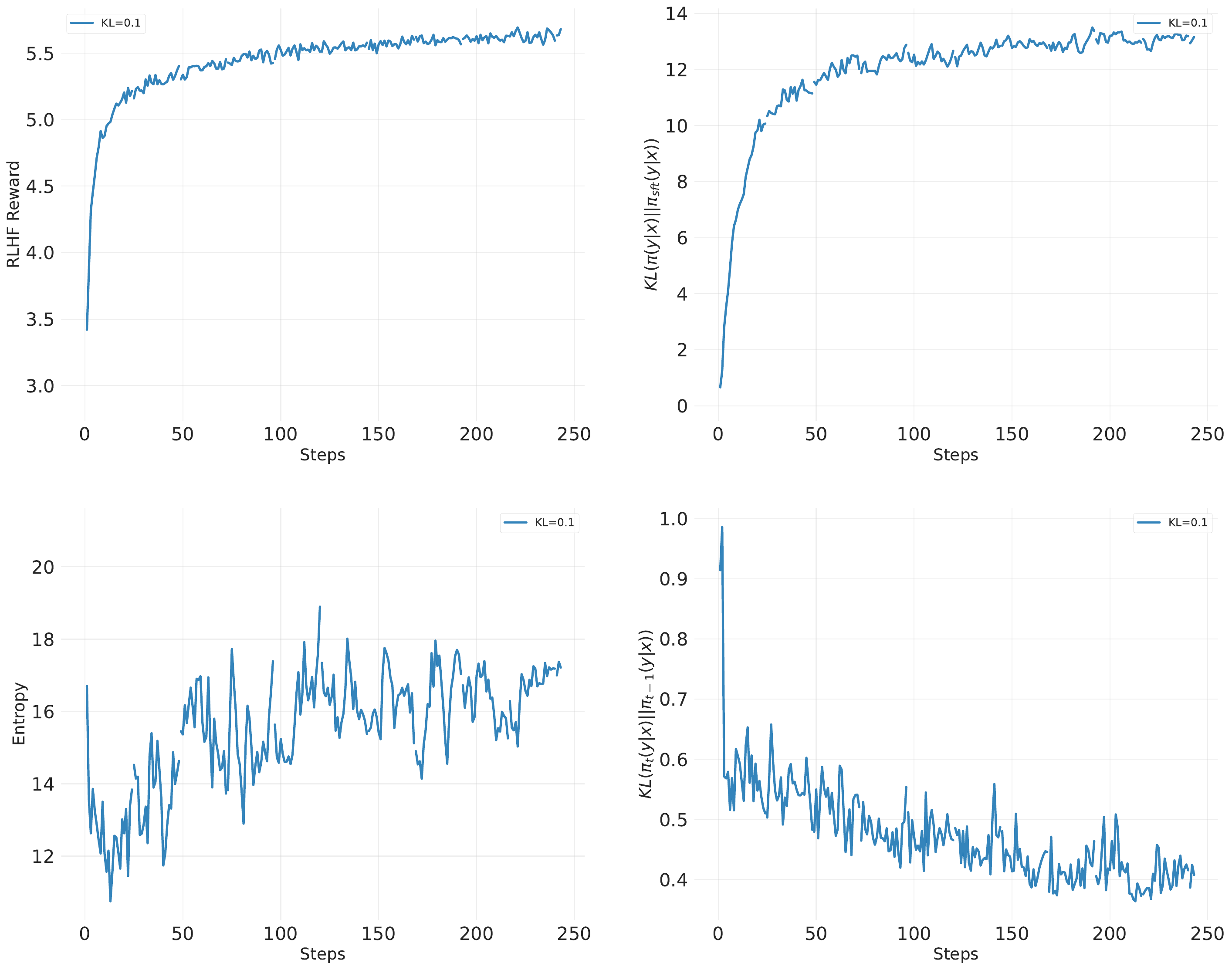}
    \caption{Training metrics for KL-constrained RL on the Pythia-6.9B model. Top panel shows the KL divergence between the policy and reference SFT policy ($KL(\pi_t || \pi_\text{SFT})$) over training steps. Middle panel displays the reward trajectory (RLHF reward). Bottom panel shows the KL divergence between consecutive policy updates ($KL(\pi_t || \pi_{t-1})$). All results correspond to a single training run with a fixed KL constraint.}

    \label{fig:pythia6.9b-kl}
\end{figure*}

\end{document}